\DeclareDelimiter{\Otilde}[\mathnormal{\widetilde{O}}]{\lparen}{\rparen}
\DeclareDelimiter{\Normal}[\mathcal{N}]{\lparen}{\rparen}
\DeclareDocumentMathCommand{\b}{m}{\mathbf{#1}}
\DeclareDocumentMathCommand{\vx}{}{\b{x}}
\DeclareDocumentMathCommand{\vI}{}{\b{I}}
\DeclareDocumentMathCommand{\vg}{}{\b{g}}
\DeclareDocumentMathCommand{\ve}{}{\b{e}}
\DeclareDocumentMathCommand{\vM}{}{\b{M}}
\DeclareDocumentMathCommand{\vy}{}{\b{y}}
\DeclareDocumentMathCommand{\vv}{}{\b{v}}
\DeclareDocumentMathCommand{\vm}{}{\b{m}}
\newcommand{\Cov}{\mathsf{Cov}}
\newcommand{\Law}{\mathsf{Law}}
\newcommand{\vybar}{\bar{\vy}}
\newcommand{\vxbar}{\bar{\vx}}
\newcommand{\vxrev}{\vxbar^{\gets}}
\newcommand{\vz}{\mathbf{z}}
\newcommand{\hvy}{\hat{\vy}}
\newcommand{\hvm}{\hat{\vm}}
\newcommand{\dotp}[2]{\left\langle #1, #2 \right \rangle}
\newcommand{\bS}{\mathbb{S}}
\newcommand{\bE}{\mathbb{E}}
\newcommand{\bP}{\mathbb{P}}
\newcommand{\aopt}{a^{\star}}
\newcommand{\hb}{\hat{b}}
\newcommand{\KL}[2]{\mathsf{KL}\left(#1\pmb{||}#2\right)}
\newcommand{\Tr}{\mathsf{Tr}}
\newcommand{\vyp}{\vy^{\mathsf{(P)}}}
\newcommand{\vyt}{\vy^{\mathsf{(T)}}}
\newcommand{\vybarp}{\vybar^{\mathsf{(P)}}}
\newcommand{\vybart}{\vybar^{\mathsf{(T)}}}
\newcommand{\TV}[2]{\mathsf{TV}\left(#1,#2\right)}
\newcommand{\iidsim}{\stackrel{i.i.d.}{\sim}}
\newcommand{\disteq}{\stackrel{d}{=}}
\title{Diffusion Models are Secretly Exchangeable: Parallelizing DDPMs via Autospeculation}
\author[1]{Hengyuan Hu}
\author[1]{Aniket Das}
\author[1]{Dorsa Sadigh}
\author[1]{Nima Anari}
\affil[1]{Stanford University}
\date{}
	\icmltitlerunning{Diffusion Models are Secretly Exchangeable: Parallelizing DDPMs via Autospeculation}
\begin{document}
\Tag<icml>{
    \twocolumn[
    \icmltitle{Diffusion Models are Secretly Exchangeable:\\ Parallelizing DDPMs via Autospeculation}

    \icmlsetsymbol{equal}{*}
    
    \begin{icmlauthorlist}
    \icmlauthor{Hengyuan Hu}{equal,stanford}
    \icmlauthor{Aniket Das}{equal,stanford}
    \icmlauthor{Dorsa Sadigh}{stanford}
    \icmlauthor{Nima Anari}{stanford}
    \end{icmlauthorlist}
    
    \icmlaffiliation{stanford}{Computer Science Department, Stanford University,  California, USA}
    
    \icmlcorrespondingauthor{Hengyuan Hu}{hengyuan.hhu@gmail.com}
    \icmlcorrespondingauthor{Aniket Das}{aniketd@stanford.edu}
    
    % You may provide any keywords that you
    % find helpful for describing your paper; these are used to populate
    % the "keywords" metadata in the PDF but will not be shown in the document
    \icmlkeywords{Diffusion Model, Parallel Inference, Stochastic Localization, Exchangeability}
    
    \vskip 0.3in
    ]
    \printAffiliationsAndNotice{\icmlEqualContribution}
}
\Tag{
	\maketitle
}

\begin{abstract}
Denoising Diffusion Probabilistic Models (DDPMs) have emerged as powerful tools for generative modeling. However, their sequential computation requirements lead to significant inference-time bottlenecks. In this work, we utilize the connection between DDPMs and Stochastic Localization to prove that, under an appropriate reparametrization, the increments of DDPM satisfy an exchangeability property. This general insight enables near-black-box adaptation of various performance optimization techniques from autoregressive models to the diffusion setting. To demonstrate this, we introduce \emph{Autospeculative Decoding} (ASD), an extension of the widely used speculative decoding algorithm to DDPMs that does not require any auxiliary draft models. Our theoretical analysis shows that ASD achieves a $\Otilde{K^{\nicefrac{1}{3}}}$ parallel runtime speedup over the $K$ step sequential DDPM. We also demonstrate that a practical implementation of autospeculative decoding accelerates DDPM inference significantly in various domains. 

\end{abstract}

\section{Introduction}
\label{sec:intro}

Diffusion models have emerged as one of the leading tools in generative modeling \cite{sohl2015deep,ho2020denoising,song2021scoresde}. They are widely used to generate samples in continuous spaces, such as when an image \cite{dhariwal2021diffusion,nichol2021glide,kingma2021variational,rombach2022high} or a sequence of continuous actions \cite{reuss2023goal-diffuse,chi2023diffusion} is desired. A major limitation of diffusion models, particularly limiting in real-time applications such as continuous control or robotics, is their slow inference time. Standard implementations of Denoising Diffusion Probabilistic Models (DDPMs) take a large number of denoising steps, $\approx 1000$ steps in image models \cite{ho2020denoising,rombach2022high}, and $\approx 100$ steps in robotics models \cite{chi2023diffusion}, to generate samples. Using fewer steps empirically results in a loss of quality; theoretical analysis of DDPMs suggests that $\approx \Otilde{d}$ steps are needed for high-fidelity samples from $d$-dimensional distributions \cite{benton2023linear}.

Numerous works have attempted to address slow inference. Works like DDIM \cite{song2021ddim} and DPMSolver \cite{lu2022dpmpp} have lowered the number of steps by altering the inference method to a deterministic one, barring the initialization, and utilizing ODE solvers. Other works \cite{meng2022distillation,song2023consistency,watson2022learning,lu2024simplifying} have changed the underlying model, requiring new training, to enable fewer steps during inference. These speedups are all achieved by trading off the quality of the generated samples. Recently, \citet{shih2024parallel} showed that instead of trading off quality for speed, one can trade compute for speed by leveraging parallelization (e.g., utilizing multiple GPUs). 
However, the proposed method in \cite{shih2024parallel} still leaves a small but tunable error as it utilizes a fixed point iteration method with early stopping.

Our work also focuses on accelerating inference in DDPMs via parallelization---\emph{albeit without changing the quality of the generated samples.} Surprisingly, we show that it is possible to produce samples \emph{identically distributed} as the sequential process, i.e., thereby ensuring \emph{zero} quality loss, while also achieving a \emph{theoretically guaranteed} speedup over sequential sampling. We prove that our method needs $\approx \Otilde{d^{\nicefrac{2}{3}}}$ parallel calls to the model, much smaller than $\approx \Otilde{d}$ in sequential sampling \cite{benton2023linear}.

Our algorithm adapts speculative decoding, an acceleration technique for autoregressive models such as large language models (LLMs), to the setting of diffusion models. Traditionally speculative decoding uses an \emph{additional} smaller LLM (or model), called the draft, to accelerate sampling from the main LLM. The draft model produces tokens that are then in \emph{parallel} verified by the main model -- leading to possible parallel speedup. At first sight, adapting speculative decoding to diffusion models seems challenging for two reasons: first, the space is continuous; and second, a draft diffusion model is likely to follow very different trajectories from the main diffusion model, unlike two LLMs which can plausibly predict a few similar tokens at a time. We overcome these challenges by \emph{eschewing a draft model altogether}, and instead show how to use the main diffusion model itself as its own draft. Our main insight enabling this \emph{autospeculation} behavior is that the distribution of denoising trajectories in a diffusion model exhibits certain time symmetries. In technical terms, we show that after a syntactic time/scale change, the increments $\vy_{t+\d{t}}-\vy_t$ of a denoising trajectory $(\vy_t)_{t\in [0, T]}$ form an exchangeable sequence of random variables -- in other words, their distribution remains unchanged under any permutation. This allows us to use independent samples from the distribution of the next denoising step as speculations for all future denoising steps.

We emphasize that this secret property of diffusion models, which we refer to as \emph{hidden exchangeability} is surprising. For example, in LLMs, a similar property would only hold if the sequences of tokens in the underlying distribution are permutation-invariant; that is the probability of ``hello world'' in the language is the same as ``world hello.'' In addition to enabling autospeculation, we use exchangeability to derive our theoretically guaranteed $\approx \Otilde{d^{\nicefrac{1}{3}}}$ speedup.

Our contributions can be summarized as follows:

\textbf{Hidden Exchangeability in DDPMs:} We leverage the equivalence between DDPMs and Stochastic Localization \cite{eldan2013thin, chen2022localization}, recently established by \citet{montanari2023sampling} to uncover a fundamental \emph{hidden exchangeability property} for the trajectories of DDPM. In particular, we show that, after an appropriate transformation, the increments of the DDPM process are exchangeable, i.e., their joint law is permutation-invariant. This insight enables us to view DDPMs through the lens of any-order autoregressive models \cite{shih2022training}, potentially allowing the adaptation of performance optimization techniques previously limited to traditional autoregressive architectures.

\textbf{Autospeculative Decoding for DDPMs:} We use the hidden exchangeability property to design an efficient inference algorithm, which, at any given timestep $a$, makes a single call to the DDPM model to predict future increments, and subsequently makes \emph{calls to the same model, all in one parallel step} to verify these predictions via rejection sampling. Our algorithm extends the Speculative Decoding \cite{leviathan2023fast} paradigm, widely used for LLM inference, to the diffusion model setting. Unlike traditional speculative decoding, our algorithm eschews an auxiliary draft model and instead leverages the hidden exchangeability property to make the original DDPM speculate about itself. Hence, we call our algorithm \emph{Autospeculative Decoding (ASD)} and show that it performs error-free DDPM inference with massive parallelization.

\textbf{Theoretical Guarantees:} We prove that ASD is an \emph{error-free parallelization algorithm}, whose output is \emph{identically distributed} as sequential samples from the DDPM. Furthermore, under a minimal second-moment assumption, we prove that ASD makes at most $\Otilde{d^{\nicefrac{2}{3}}}$ parallel calls to the model on $d$-dimensional distributions, as compared to $\Otilde{d}$ calls needed in the sequential implementation of DDPM \cite{benton2023linear}. To the best of our knowledge, our result is the \emph{first} parallel inference algorithm for DDPMs that shows \emph{empirical speedups} and has a \emph{theoretically guaranteed speedup} without any restrictive assumptions on the score function such as Lipschitz continuity.

\textbf{Empirical Evaluation:} We complement our theoretical contributions with extensive empirical evaluations on diffusion models for image generation and robot control tasks. ASD leads to 1.8-4$\times$ speedup in wall-clock time without any loss in quality.

\subsection{Notation}
We analyze diffusion models defined on Euclidean spaces $\R^d$. We use $\vx,\vy$ to represent vectors and $\vM$ to represent matrices. $\vI$ is the identity matrix. For a random variable $\vz$, we use $\Law(\vz)$ to denote its distribution. For random variables $\vx$ and $\vy$, we use $\vx \disteq \vy$ to denote $\Law(\vx) = \Law(\vy)$. For any measure $\mu$, $\Cov[\mu]$ denotes its covariance. We use $\TV{.}{.}$ and $\KL{.}{.}$ to denote the Total Variation distance and KL divergence. $B_t$ and $W_t$ denote standard Brownian motions on $\R^d$ unless stated otherwise. We use the $O, \Omega, \Theta$ notation to suppress dependence on numerical and problem specific constants and $\Tilde{O}, \Tilde{\Omega}, \Tilde{\Theta}$ to suppress logarithmic factors. $\lesssim, \gtrsim$ and $\asymp$ denote $\leq, \geq$ and $=$ modulo universal constants. 
\section{Related Work}
\label{sec:relatedwork}

Several prior works, notably \citet{shih2024parallel} and \citet{pokle2022deep}, have shown how to use parallelization to empirically accelerate sampling from diffusion models. These works use a fixed point iteration, also called the Picard iteration or the collocation method, to break the sequential nature of denoising steps in diffusion models. These methods use heuristics to stop the fixed point iteration when approximate convergence is detected, and thus leave a small error in the samples, unlike our results. Later works of \citet{gupta2024faster,chen2024accelerating} provided theoretical convergence guarantees for these parallelization techniques, but under the very restrictive assumption that the score functions of the underlying distribution and all of its evolutions under the forward process of DDPM satisfy $L$-Lipschitzness for a very small $L$. In particular, when the underlying distribution is not smooth, or even when $L$ is a small polynomial of the dimension $d$, the guarantees become vacuous.

The work of \citet{benton2023linear} provides the best-known theoretical analysis for the runtime of diffusion models in the sequential setting, namely $\Otilde{d}$ denoising steps for $d$-dimensional distributions. Without restrictive assumptions on the underlying distribution, such as Lipschitzness of scores, this $\Otilde{d}$ guarantee remained the best-known theoretical result even in the parallel setting. Our work provides a parallel speedup under the minimal assumption of boundedness of second moments, essentially the same assumption as in the work of \citet{benton2023linear}.

Our work adapts speculative decoding \cite{leviathan2023fast, chen2023accelerating}, a parallelization technique widely used for LLMs and autoregressive models, to the setting of diffusion models. In the special setting of any-order autoregressive models \cite{shih2022training}, which excludes most LLMs, \citet{anari2024parallel} provided the first theoretically guaranteed speedup for speculative decoding. This, combined with our insight on the exchangeability of diffusion models, was the main source of inspiration for our work. \citet{anari2024parallel} showed a speedup of $\Otilde{d^{\nicefrac{1}{3}}/\poly\log(q)}$ for generating a sequence of $d$ tokens, if the token space is of size $q$. We note that, while our proof adopts many elements from the work of \citet{anari2024parallel}, there are significant challenges that we overcome: first, diffusion models live in a continuous space, roughly speaking this corresponds to $q=\infty$, for which the speedup guarantees of \citet{anari2024parallel} become meaningless; and second, in diffusion models, even the sequential sampling process is a discretization of a Stochastic Differential Equation (SDE), and the error resulting from the discretization makes the analysis particularly challenging.

The concurrent and independent work of \citet{de2025accelerated}, which came to our notice post-submission, also proposes the idea of parallelizing diffusion models via speculative decoding. In particular, their idea of a Frozen Target Draft Model corresponds to Autospeculative Decoding. To the best of our understanding, \citet{de2025accelerated} does not establish any parallel speedup guarantees for their algorithm, and their algorithm design is not based on the hidden exchangeability property.  
\section{Exchangeability in Diffusion Models}
\label{sec:exchangeability}
We first present a brief introduction to DDPMs and refer the readers to \citet{song2021scoresde,chen2022sampling} for a broader discussion. For an unknown target distribution $\mu$ on $\R^d$, we consider the following forward or noising process $\vxbar^{\to}_t$ for time $t \in [0,T]$.
\begin{align}
\label{eqn:ou}
\d \vxbar^{\to}_t = h(t) \vxbar^{\to}_t \d t + \sqrt{u(t)}\d W_t, \ \vxbar^{\to}_0 \sim \mu
\end{align}
where $h(t), u(t)$ are arbitrary continuous functions with $u(t) > 0$. Notable special cases include the Variance Preserving SDE (VP-SDE) which sets $h(t) = -\tfrac{1}{2}u(t)$ and the Variance Exploding SDE (VE-SDE) which sets $h(t) = 0$. Under mild conditions, $\mu_t = \Law(\vxbar^{\to}_t)$ converges to a tractable distribution which is easy to sample from. For instance, the choice $u(t) = 2$ and $h(t) = -1$ corresponds to the Ornstein Uhlenbeck (OU) SDE, which converges to a standard Guassian at an exponential rate \citep{bakry2014analysis}, and hence, satisfies $\mu_T \approx \Normal{0,\vI}$. 

The reverse process, i.e., the denoising process $\vxrev_t \sim \mu_{T-t}$ is governed by the following SDE
\begin{align}
\label{eqn:ou-ddpm}
    \d \vxrev_t &= f(t,\vxrev_t) \d t + \sqrt{u(T-t)} \d W_t \nonumber \\
    f(t,\vxrev_t) &= -h(T-t)\vxrev_t + u(T-t) \nabla \ln \mu_{T-t}(\vxrev_t
    )
\end{align}
The working principle behind DDPMs is as follows: Since $\mu_T$ is easy to approximately sample from (e.g $\mu_T \approx \Normal{0,\vI}$ for the OU DDPM), access to an (approximate) oracle for the score function $\nabla\ln \mu_{T-t}(\vxrev_t)$ gives us a generative model for $\mu$ which involves sampling $\vy_0 \sim \nu$ (where $\nu \approx \mu_T$) and then following the dynamics of \eqref{eqn:ou-ddpm}. Algorithmically, this is implemented via the following Euler discretization:
\begin{align}
\label{eqn:ou-ddpm-euler}    
\vy_{i+1} = \vy_{i} + \eta_i f(t_i, \vy_{i}) + \sigma_{i+1} \xi_{i+1}, \ \xi_{i+1} \sim \Normal{0,\vI}
\end{align}
Where  $t_0 \leq t_1 \leq \ldots t_{K-1}$ are discretization points, $\eta_i = t_{i+1} - t_i$ is the step-size and $\sigma_{i+1} = \sqrt{\eta_ig(T-t_i)}$.

\subsection{Hidden Exchangeability of DDPMs}
\label{subsec:hidden-exchangeability}
We now introduce Stochastic Localization (SL), an analytic tool developed by \citet{eldan2013thin}, which has led to several breakthroughs in probability and theoretical computer science \cite{chen2021almost, el2022sampling, benton2023linear, anari2024trickle}. Given a target distribution $\mu$ on $\R^d$, SL is defined by the following process $\vybar_t$:
\begin{align}
\label{eqn:sl-defn}
\vybar_t &= \vm(t, \vybar_t) \d t + \d B_t, \ \vybar_0 = 0 \nonumber \\
\vm(t, \vy) &= \E_{\vx^\star \sim \mu, \ \xi \sim \Normal{0,\vI}}{\vx^\star \given t \vx^* + \sqrt{t} \xi = \vy} 
\end{align}
It is known that $\Law(\nicefrac{\vybar_t}{t})=\mu*\Normal{0, \vI/t}$ and in particular as $t\to \infty$, we have $\Law(\nicefrac{\vybar_t}{t}) \to \mu$ \cite{el2022information}. To this end, stochastic localization resembles the reverse process in DDPMs as it can also function as a generative model for $\mu$ given oracle access to $\vm(t,\vy)$. This similarity was demystified by \citet{montanari2023sampling}, who proved that the OU DDPM is equivalent to the SL process. In Appendix \ref{app:sl-properties}, we extend their result to show that arbitrary DDPMs are equivalent reparametrizations of Stochastic Localization. In particular, we prove the following 
\begin{theorem}[Equivalence of DDPM and SL]
\label{thm:sl-ddpm-equivalence}
Let $\vxrev_t$ denote the DDPM reverse process defined in \eqref{eqn:ou-ddpm} and let $\vybar_t$ denote the SL process defined in \eqref{eqn:sl-defn}. Then, there exist continuous invertible functions $\gamma(t), \zeta(t)$ that are uniquely determined by $h(t)$ and $u(t)$ such that $\vybar_t \disteq \gamma(t) \vxrev_{\zeta(t)}$
\end{theorem}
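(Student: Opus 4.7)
The plan is to compute the marginal laws of both processes explicitly, use them to force the reparametrization $(\gamma,\zeta)$ uniquely, and then upgrade marginal equivalence to process equivalence via an It\^o computation with a time change. Solving the linear forward SDE \eqref{eqn:ou} gives $\vxbar^{\to}_t \disteq \alpha(t) X^\star + \sigma(t) Z$, where $X^\star \sim \mu$, $Z \sim \Normal{0,\vI}$, $\alpha(t) := \exp\bigl(\int_0^t h(r)\,dr\bigr)$, and $\sigma^2(t) := \alpha(t)^2 \int_0^t u(s)/\alpha(s)^2\,ds$. The SL marginal is $\vybar_t \disteq tX^\star + \sqrt t\,Z$. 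So both families of marginals have the same parametric form and differ only through a signal-to-noise ratio: $\rho(t) := \alpha(t)^2/\sigma^2(t)$ for the DDPM versus $t$ for SL. A short ODE computation yields $\rho' = -\alpha^2 u/\sigma^4 < 0$, so $\rho$ is strictly decreasing and invertible on its image.

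\textbf{Determining $\gamma$ and $\zeta$.} Matching the law of $\gamma(t)\vxrev_{\zeta(t)} = \gamma(t)\alpha(T-\zeta(t)) X^\star + \gamma(t)\sigma(T-\zeta(t)) Z$ to that of $\vybar_t$ forces
\begin{align*}
\gamma(t)\,\alpha(T-\zeta(t)) = t, \qquad \gamma(t)\,\sigma(T-\zeta(t)) = \sqrt{t}.
\end{align*}
The ratio yields $\rho(T-\zeta(t)) = t$, so $\zeta(t) = T - \rho^{-1}(t)$, and then $\gamma(t) = t/\alpha(T-\zeta(t))$. Continuity and invertibility follow from those of $h, u$, and $\rho$, and both $\gamma, \zeta$ are uniquely determined by $h(t),u(t)$ as claimed.

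\textbf{Process equivalence via It\^o.} To upgrade to process equivalence, I would apply the time-changed It\^o formula to $\tilde\vy_t := \gamma(t)\vxrev_{\zeta(t)}$. Substituting \eqref{eqn:ou-ddpm} and $d\vxrev_{\zeta(t)} = \zeta'(t)f(\zeta(t),\vxrev_{\zeta(t)})\,dt + \sqrt{u(T-\zeta(t))\zeta'(t)}\,dB'_t$ into $d\tilde\vy_t = \gamma'(t)\vxrev_{\zeta(t)}\,dt + \gamma(t)\,d\vxrev_{\zeta(t)}$, the diffusion coefficient becomes $\gamma(t)\sqrt{u(T-\zeta(t))\zeta'(t)}$, which collapses to $1$ after differentiating the second defining equation for $\gamma,\zeta$. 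For the drift, Tweedie's formula gives $\vm(t,y) = y/t + \nabla\ln p_{\vybar_t}(y)$, and the change of variables identifies $\nabla\ln p_{\tilde\vy_t}(y) = \gamma(t)^{-1}\nabla\ln\mu_{T-\zeta(t)}(y/\gamma(t))$. Collecting terms, the coefficient of $\tilde\vy_t$ in the linear part of the drift equals $\gamma'(t)/\gamma(t) - \zeta'(t)\,h(T-\zeta(t))$, which reduces to $1/t$ by differentiating $\gamma(t)\alpha(T-\zeta(t))=t$, while the nonlinear score piece collapses to $\nabla\ln p_{\tilde\vy_t}(\tilde\vy_t)$. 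Hence $\tilde\vy_t$ satisfies the SL SDE from $\tilde\vy_0 = 0$, and uniqueness of strong solutions delivers the claim.

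\textbf{Main obstacle.} The marginal-matching in the first two steps is essentially algebraic and poses little difficulty. The real work is in the drift computation, where three separate contributions combine into the coefficient of $\tilde\vy_t$: the derivative $\gamma'(t)$ of the rescaling, the time-change Jacobian $\zeta'(t)$ acting on the $-h(T-\zeta(t))\vxrev_{\zeta(t)}$ piece of $f$, and the rescaling in rewriting $\vxrev_{\zeta(t)}$ back as $\tilde\vy_t/\gamma(t)$. These must conspire to produce exactly $1/t$, which they do, but only through careful use of both defining equations for $\gamma,\zeta$; a sign slip or misplaced factor of $\alpha$ would break the argument. I expect this bookkeeping, together with the verification that the Brownian pieces recombine correctly under the time change, to be where all the care is needed.
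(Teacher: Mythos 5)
Your proof is correct, but it takes a genuinely different route from the paper's. The paper never touches the reverse SDE or the score: it shows (via the time-transformation lemma and It\^o on the \emph{forward} processes, which are linear and score-free) that the general forward process satisfies $\vxbar^{\to}_t = r(t)\vz_{\alpha(t)}$ for a canonical OU process $\vz_t$, and then composes this with the known OU--SL equivalence $\vybar_t \disteq t e^{s(t)}\vz_{s(t)}$ of Montanari (\cref{thm:andrea-ou-sl}), so that $\gamma,\zeta$ come out as compositions $\gamma(t) = te^{s(t)}/r(\alpha^{-1}(s(t)))$, $\zeta(t) = T - \alpha^{-1}(s(t))$. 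You instead bypass the OU intermediary: you read off $(\gamma,\zeta)$ from signal-to-noise matching of the explicit Gaussian-mixture marginals (your $\rho^{-1}(t)$ is exactly the paper's $\alpha^{-1}(s(t))$, since the paper's $\alpha(t)=\tfrac12\ln(1+1/\rho(t))$), and then verify directly that $\gamma(t)\vxrev_{\zeta(t)}$ solves the SL SDE, using Tweedie's formula $\vm(t,\vy)=\vy/t+\nabla\ln p_t(\vy)$ and the change-of-variables identity for the score. I checked your bookkeeping: $\gamma^2 u(T-\zeta)\zeta'=1$ and $\gamma'/\gamma-\zeta'h(T-\zeta)=1/t$ both follow from the two defining equations, so the drift and diffusion do collapse as claimed. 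What the paper's route buys is that the only It\^o computation involves linear SDEs, with all score manipulation outsourced to a cited theorem; what your route buys is self-containedness and an explicit, interpretable form for $\zeta$ in terms of the SNR. One shared caveat: your final appeal to ``uniqueness of strong solutions'' for an SDE whose drift involves a non-Lipschitz score (and the SL mean function) is at the same informal level as the paper's own ``same SDE, same initial condition'' step, so it is not a gap relative to the paper, but it is the place where a fully rigorous treatment would need weak uniqueness hypotheses.
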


The key insight behind our algorithm comes from the following time-invariance property of SL. We present a proof of this result and a detailed discussion on the properties of SL in Appendix \ref{app:sl-properties}.

%\vspace{1mm}
\begin{theorem}[Exchangeability in SL increments]
\label{thm:sl-exchangeability}
Let $\vybar_t$ denote the SL process defined above. Consider any $t_1 \leq t_2 \leq \ldots \leq t_m$ and let $\eta_i = t_{i+1} - t_i$. Then, the increments of the SL process satisfies the following time-invariance property for any $\pi \in \mathbb{S}_{m-1}$:
\small
\begin{align*}
    \Law((\vy_{t_i + \eta_i} - \vy_{t_i})_{i\in [m-1]}) = \Law((\vy_{t_{\pi(i)} + \eta_i} - \vy_{t_{\pi(i)}})_{i\in [m-1]}) 
\end{align*}
\normalsize
In particular, if the time increments $\eta_i$ are all equal, then the increments $\Delta_i = \vy_{t_{i+1}} - \vy_{t_i}$ are exchangeable, i.e., $\Law((\Delta_i)_{i \in [m-1]}) = \Law((\Delta_{\pi(i)})_{i \in [m-1]}) \ \forall \ \pi \in \mathbb{S}_{m-1}$
\end{theorem}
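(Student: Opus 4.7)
The plan is to leverage a \emph{sampling representation} of stochastic localization: I would first prove, as a process-level identity in distribution,
\[
\vybar_t \disteq t \vx^\star + W_t,
\]
where $\vx^\star \sim \mu$ and $W_t$ is a standard Brownian motion independent of $\vx^\star$. Granting this, the claimed time-invariance of the increments reduces to elementary properties of Brownian motion.

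To establish the representation, I would introduce the candidate process $\hvy_t := t\vx^\star + W_t$ and verify that it solves the SL SDE \eqref{eqn:sl-defn}, then invoke weak uniqueness. The drift of $\hvy$ in its own filtration is $\bE[\vx^\star \mid \mathcal{F}_t^{\hvy}]\, dt$, so it suffices to show this equals $\vm(t, \hvy_t)\, dt$. The key observation is that $\hvy_t$ is a sufficient statistic for $\vx^\star$ within $\mathcal{F}_t^{\hvy}$: by Girsanov, the Radon--Nikodym derivative of the law of $(\hvy_s)_{s \leq t}$ conditional on $\vx^\star = x$, relative to the law of a driftless Brownian motion, equals $\exp(\langle x, \hvy_t\rangle - \tfrac{t}{2}\|x\|^2)$, which depends on the path only through the endpoint $\hvy_t$. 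A standard Bayes computation then yields $\bE[\vx^\star \mid \mathcal{F}_t^{\hvy}] = \bE[\vx^\star \mid \hvy_t] = \vm(t, \hvy_t)$, so $\hvy$ satisfies the SL SDE; weak uniqueness identifies the laws of $\vybar$ and $\hvy$ as processes.

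With the representation in hand, the theorem is almost immediate. For any $a, \eta > 0$, we have
\[
\vybar_{a+\eta} - \vybar_{a} \disteq \eta\, \vx^\star + (W_{a+\eta} - W_a),
\]
where the Gaussian term is $\Normal{0, \eta \vI}$ and depends on $\eta$ alone, not on $a$. For the special case of equal $\eta_i = \eta$, the Brownian increments $W_{t_{i+1}} - W_{t_i}$ are iid $\Normal{0, \eta \vI}$ and independent of $\vx^\star$, so conditional on $\vx^\star$ the $\Delta_i$ are iid copies of $\eta \vx^\star + \Normal{0, \eta \vI}$; integrating out $\vx^\star$ yields full exchangeability. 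The general time-invariance statement follows from the same decomposition, since the law of any increment of length $\eta$ depends only on $\eta$ and $\vx^\star$, and the independence structure of disjoint Brownian increments is itself permutation-symmetric.

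The main obstacle is the rigorous justification of the sampling representation at the process level. This requires (i) a careful application of Girsanov's theorem, including checking integrability conditions (satisfied under the mild second-moment assumption on $\mu$ used elsewhere in the paper), (ii) verifying that the Bayesian posterior genuinely coincides with the conditional expectation in the filtration generated by $\hvy$, and (iii) invoking weak uniqueness of \eqref{eqn:sl-defn} to upgrade drift-matching into a process-level law identity rather than a time-marginal identity. Once the representation is established, the exchangeability is a structural consequence of Brownian motion's independent-increments property.
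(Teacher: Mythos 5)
Your proof follows essentially the same route as the paper's: both rest on the sampling representation $\vybar_t \disteq t\vx^\star + W_t$ (the paper's Theorem~\ref{thm:sl-alt-rep}, imported from \citet{el2022information} and \citet{montanari2023sampling}) and then deduce the claim by conditioning on $\vx^\star$ and using the permutation-invariance of the resulting Gaussian increment laws. The only difference is that you additionally sketch a Girsanov/weak-uniqueness derivation of that representation, which the paper simply cites.
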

Since DDPMs and SL are equivalent reparametrizations of each other, we refer to \cref{thm:sl-exchangeability} as the \emph{hidden exchangeability} property of DDPMs. In essence, \cref{thm:sl-exchangeability} allows us to use the DDPM increments at any given timestep as a proposal distribution for sampling the increments of future timesteps. As we shall demonstrate, this proposal and its subsequent verification can each be performed in parallel, leading to a highly efficient inference algorithm.

Note that, assuming a fine discretization, calls to the function $\vm(\cdot)$, equivalent to the trained model in DDPMs, precisely allow us to find the conditional distributions of the increments: $\Law\parens{\Delta_i\given \Delta_{<i}}$. So in a way, one can view diffusion models as autoregressive models for the increments, with the major caveat that the token space here, $\R^d$, is infinite. \cref{thm:sl-exchangeability} shows that $\Law\parens{\Delta_j\given \Delta_{<i}}$ is the same for all $j\geq i$. So in particular at any point in the denoising process we can produce the marginal distribution of all future increments -- it is the same as the immediately next increment. Independent samples from these marginals form the proposal/speculation in our algorithm, as was done by~\citet{anari2024parallel} for any-order autoregressive models.
\section{Autospeculative Decoding}
\label{sec:autospeculative-decoding}

In this section, we describe AutoSpeculative Decoding (ASD), our main algorithmic contribution. 
We consider the general problem of sampling from Euler discretizations of SDEs of the following form:
\begin{align}
\label{eqn:euler-sde}
    \vy_{i+1} = \vy_i +  \eta_i g(t_i, \vy_i) + \sigma_{i+1} \xi_{i+1}, \ \xi_{i+1} \sim \Normal{0,\vI}
\end{align}
where $t_0 \leq \ldots \leq t_{k}$ denotes a sequence of time-steps and $\eta_i = t_{i+1}-\eta_i$ denotes the step-size. We note that this includes the Euler discretization of SL as a special case. Although typical discretizations of the DDPM don't directly fit into equation \eqref{eqn:euler-sde}, this can be easily remedied by first translating the DDPM iterate to an SL iterate via the reparametrization discussed in \cref{sec:exchangeability}, incrementing by one time-step in the SL formulation and then mapping it back to the DDPM formalism.

At any iteration $i$, the conditional distribution of the next iterate $q(\vy_{i+1} | \vy_i)$, which we call the target distribution at step $i$, is of the following form:
\begin{align}
\label{eqn:target-dist}
    q(\vy_{i+1} | \vy_i) &= \Normal{\vy_{i+1} \ | \ b(\eta_i,\vy_i), \sigma_i} \nonumber \\
    b(\eta_i,\vy_{i}) &= \vy_{i} + \eta_i g(t_{i}, y_i)
\end{align}
We refer to $b(\eta_i,\vy_{i})$ as the \emph{target mean at step $i$}. Computing $b$ involves calling an (approximate) oracle for $g$, typically implemented via a neural network. Oracle calls to $g$, which we refer to as model calls, represent the primary computational bottleneck to sampling from equation \cref{eqn:euler-sde}. Note that for the SL process, $g(\cdot)$ is the same as the mean-predicting function $\vm(\cdot)$ from \cref{eqn:sl-defn}.

To leverage the hidden exchangeability property of DDPMs, our algorithm does the following: At any given step $a$, it tries to predict or \emph{speculate} the target mean of the future timesteps by sampling from a proposal distribution $p$, defined as follows. The proposal distribution is explicitly designed such that speculating the mean of future timesteps requires only one call to $g$ (i.e., one model call).
\begin{align}
\label{eqn:prop-dist}
p\left((\vy_{i+1})_{i \geq a} | \vy_a\right) &= \prod_{i \geq a} p(\vy_{i+1}|\vy_i, \vy_a) \nonumber \\
p(\vy_{i+1} | \vy_i, \vy_a) &= \Normal{\vy_{i+1}|\hb(\eta_i, \vy_i,\vy_a), \sigma_i} \nonumber \\
\hb(\eta_i, \vy_i, \vy_a) &= \vy_i + \eta_i g(t_a, \vy_a)
\end{align}
We call $\hb(\eta_i, \vy_i, \vy_a)$ the \emph{proposal mean for step $i$ at step $a$}, and note that it requires only one call to $g$. In fact, it can even be computed in $\Tilde{O}(1)$ parallel time via prefix sums. 

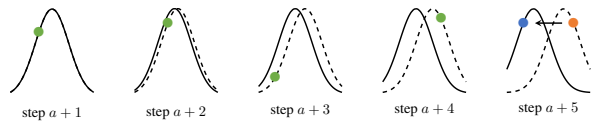
\begin{figure}[t]
	\DeclareDocumentCommand{\ASDPicture}{}{%
	\Tikz{
	\begin{scope}[shift={(-3, 0)}]
    \begin{axis}[hide axis, no markers, smooth, domain=-2.5:2.5, height=4cm, width=4cm]
        \addplot[line width=1, Black, domain=-2.5:2.5] {exp(-(x)^2/2)};
        \addplot[line width=1, Black, dashed, domain=-2.5:2.5] {exp(-(x)^2/2)};
        \node at (-0.8, 0.73) {\LARGE $\Green{\bullet}$};
        \coordinate (l) at (0, -0.2);
    \end{axis}
    \node at (l) {step $a+1$};
    \end{scope}
	\begin{scope}[shift={(0, 0)}]
    \begin{axis}[hide axis, no markers, smooth, domain=-2.5:2.5, height=4cm, width=4cm]
        \addplot[line width=1, Black, domain=-2.5:2.5] {exp(-(x+0.1)^2/2)};
        \addplot[line width=1, Black, dashed, domain=-2.5:2.5] {exp(-(x-0.1)^2/2)};
        \node at (-0.5, 0.83) {\LARGE $\Green{\bullet}$};
        \coordinate (l) at (0, -0.2);
    \end{axis}
    \node at (l) {step $a+2$};
    \end{scope}
    \begin{scope}[shift={(3, 0)}]
    \begin{axis}[hide axis, no markers, smooth, domain=-2.5:2.5, height=4cm, width=4cm]
        \addplot[line width=1, Black, domain=-2.5:2.5] {exp(-(x+0.3)^2/2)};
        \addplot[line width=1, Black, dashed, domain=-2.5:2.5] {exp(-(x-0.3)^2/2)};
        \node at (-1.5, 0.2) {\LARGE $\Green{\bullet}$};
        \coordinate (l) at (0, -0.2);
    \end{axis}
	\node at (l) {step $a+3$};
    \end{scope}
    \begin{scope}[shift={(6, 0)}]
    \begin{axis}[hide axis, no markers, smooth, domain=-2.5:2.5, height=4cm, width=4cm]
        \addplot[line width=1, Black, domain=-2.5:2.5] {exp(-(x+0.5)^2/2)};
        \addplot[line width=1, Black, dashed, domain=-2.5:2.5] {exp(-(x-0.5)^2/2)};
        \node at (1, 0.88) {\LARGE $\Green{\bullet}$};
        \coordinate (l) at (0, -0.2);
    \end{axis}
    \node at (l) {step $a+4$};
    \end{scope}
    \begin{scope}[shift={(9, 0)}]
    \begin{axis}[hide axis, no markers, smooth, domain=-2.5:2.5, height=4cm, width=4cm]
        \addplot[line width=1, Black, domain=-2.5:2.5] {exp(-(x+0.9)^2/2)};
        \addplot[line width=1, Black, dashed, domain=-2.5:2.5] {exp(-(x-0.9)^2/2)};
        \node (A) at (1.5, 0.83) {\LARGE $\Orange{\bullet}$};
        \node (B) at (-1.5, 0.83) {\LARGE $\Navy{\bullet}$};
        \draw[line width=1, -stealth] (A) -- (B);
        \coordinate (l) at (0, -0.2);
    \end{axis}
    \node at (l) {step $a+5$};
    \end{scope}
	}%
	}
	\Tag<icml>{\Center{\scalebox{0.55}{\ASDPicture}}}
	\Tag{\Center{\ASDPicture}}
	\caption{\label{fig:gsr} {A depiction of AutoSpeculative Decoding. Speculated Gaussians with proposal means $\hat{\mathbf{m}}_i$ are shown in dashed and Gaussians with target means $\mathbf{m}_i$ are shown in solid. Green and orange points are proposed samples $\hat{\mathbf{y}}_i$. The verifier accepts the first four proposals, rejects the fifth, and replaces the fifth proposed sample with its reflection. Proposed samples for steps beyond the fifth are ignored.  }}
\end{figure}

\begin{Algorithm}![t]
	\KwIn{Steps $K$, Step-sizes $(\eta_k)_{k <K}$, Variances $(\sigma^2_k)_{k \in [K]}$, Initial $\vy_0$,  Speculation length $\theta$}
	
	sample $(u_1,\ldots,u_K) \sim \text{Uniform}([0,1]^K)$\;
	sample $(\xi_1, \ldots, \xi_K) \iidsim \Normal{0,\vI}$\;
	$a \gets 0$\;
	\While{$a<K$}{
		$\hvy_a \gets \vy_a, \ b \gets \min(K, a + \theta)$\;
		\tcp{compute proposal means and proposal samples}
		$\vv_a \gets g(t_a, \vy_a)$\label{ln:model-call-1}\;
		\For{$i=a, \dots, b-1$}{
			$\hvm_{i+1} \gets \hvy_i + \eta_i \vv_a$\label{ln:proposal-mean}\;
			$\hvy_{i+1} \gets \hvm_{i+1} + \sigma_{i+1} \xi_{i+1}$\label{ln:proposal-sample}\;
		}
		\tcp{speculate target means in parallel}
		\ParallelFor{$i=a, \dots, b-1$}{
			$\vm_{i+1} \gets \hvy_i + \eta_i g(t_i, \vy_i)$\label{ln:model-call-2}\;
		}
		\tcp{verify speculations via rejection sampling}
		$[(\vz_i)_{a < i \leq b},j] \gets \operatorname{Verifier}((u_i, \xi_i, \hvm_i,\vm_i)_{a < i \leq b})$ \label{ln:accept-reject-begin}\;
		\tcp{advance until first rejection}
		\eIf{$j<b$\label{ln:advance-begin}}{
			$\vy_i \gets \vz_i$ $\forall i \in [a+1, j+1]$\;
			$a \gets j+1$\;
		}{
			$\vy_i \gets \vz_i$ $\forall i \in [a+1, j]$\;
			$a \gets j$\label{ln:accept-reject-end}\;
		}
	}
	\Return{$(\vy_0, \dots, \vy_{K})$}\;
	\caption{Autospeculative Decoding (ASD)\label{alg:AutoSpecSL}}
\end{Algorithm}

\begin{Algorithm}[t]
	\KwIn{Uniform Random Seeds $(u_{a+1}, \ldots, u_b)$, Speculated Means $(\hvm_{a+1}, \ldots, \hvm_b)$, Target Means $(\vm_{a+1}, \dots, \vm_b)$, Variance Schedule $(\sigma^2_{a+1}, \dots, \sigma^2_b)$}
	$j \gets a + 1$\;
	\ParallelFor{$i=a+1,\dots, b$}{
		\tcp{GRS defined separately in \cref{alg:RejectionSampler}}
		$(\vz_i, b_i) \gets \operatorname{GRS}(u_i, \xi_i, \hvm_i, \vm_i, \sigma^2_i)$\;
		\If{$b_i=\text{True}$ and $j<i$}{
			$j \gets i$\;
		}
	}
	\Return{$[(\vz_{a+1},\ldots,\vz_b), j]$}\;
\caption{Verifier\label{alg:ParallelReject}}
\end{Algorithm}

%
%\begin{algorithm}[th] 
%\caption{Verifier } \label{alg:ParallelReject}
%% \textbf{Input}:  Uniform Random Seeds $(u_{a+1}, \ldots, u_b)$, Speculated Means $(\hvm_{a+1}, \ldots, \hvm_b)$, Target Means $(\vm_{a+1}, \dots, \vm_b)$, Variance Schedule $(\sigma^2_{a+1}, \dots, \sigma^2_b)$
%\begin{algorithmic}[1]
%\STATE $j \gets a + 1$
%\FORALL{$i = a+1$ \TO $b$ \textbf{in parallel}}
%    \STATE \textcolor{gray}{// GRS defined separately in ~\cref{alg:RejectionSampler}}
%    \STATE $(\vz_i, b_i) \gets \textit{GRS}(u_i, \xi_i, \hvm_i, \vm_i, \sigma^2_i)$
%    \IF{$b_i$ is True \AND $j < i$}
%        \STATE $j \gets i$
%    \ENDIF
%\ENDFOR
%\RETURN $[(\vz_{a+1},\ldots,\vz_b), j]$
%\end{algorithmic}
%\end{algorithm}

\begin{Algorithm}[h]
\caption{Gaussian Rejection Sampler (GRS)\label{alg:RejectionSampler}}
	\KwIn{$u \sim \mathsf{Uniform}([0,1])$, $\xi \sim \Normal{0, \vI}$, Proposal mean $\hvm$, Target mean $\vm$, Variance $\sigma^2$}
	$\vv \gets \hvm - \vm$\;
	$b \gets \1\bracks*{u \leq \min\parens*{1, \frac{\Normal{\xi+\sigma^{-1}\vv\given 0, \vI}}{\Normal{\xi\given 0, \vI}}}}$\;
	\eIf{$b=\text{True}$}{
		$\vx \gets \hat{\vm} + \sigma \xi$\;
	}{
		$\vx \gets \vm + \sigma\parens*{\xi - 2\vv \cdot\frac{\langle \vv,\xi \rangle}{\norm{\vv}^2}}$\;
	}
	\Return{$(\vx, b)$}\;
\end{Algorithm}

%\begin{algorithm}
%\caption{Gaussian Rejection Sampler (GRS)} \label{alg:RejectionSampler}
%\begin{algorithmic}[1]
%\STATE \textbf{Input:} $u \sim \mathsf{Uniform}([0,1])$, $\xi \sim \Normal{0, \vI}$, Proposal mean $\hvm$, Target mean $\vm$, Variance $\sigma^2$
%\STATE $\vv \gets \hvm - \vm$
%\STATE $b \gets \mathbb{I}\left[u \leq \min\left(1, \frac{\mathcal{N}(\xi+\sigma^{-1}\vv|0, \vI)}{\mathcal{N}(\xi|0, \vI)}\right)\right]$
%\IF{$b$ is true}
%    \STATE $\vx \gets \hat{\vm} + \sigma \xi$
%\ELSE
%    \STATE $\vx \gets \vm + \sigma\left(\xi - 2\vv \cdot\frac{\langle \vv,\xi \rangle}{\|\vv\|^2}\right)$
%\ENDIF
%\RETURN $(\vx, b)$
%\end{algorithmic}
%\end{algorithm}

Equipped with the above definitions, we present Autospeculative Decoding in  \cref{alg:AutoSpecSL}. The algorithm, which resembles Speculative Decoding \cite{leviathan2023fast,chen2024accelerating} has three key steps: 1) Sampling from the proposal distribution via one model call, 2) Speculating the means of the future target distributions via the proposal samples, 3) Verifying the accuracy of the speculations via rejection sampling and resampling at the first disagreement. The verification procedure, stated in \cref{alg:ParallelReject} uses Gaussian Rejection Sampler, \cref{alg:RejectionSampler}, to simultaneously sample from the conditional target distribution and check if the proposal agrees with the target. \cref{alg:RejectionSampler} is motivated by the reflection coupling technique of \citet{bou2020coupling} and runs in $O(1)$ time by leveraging the fact that the proposal and target are Gaussians with the same variance.

\begin{remark}
    Although the speculated means for the DDPM process can be derived by following the recipe described before for SL and using the equivalence of SL and DDPM, the end result has an intuitive form. One can rewrite the DDPM process \eqref{eqn:ou-ddpm-euler} by expanding out $f$ to get the form
    \[ \vy_{i+1} = \alpha_i \vy_{i}+\beta_i \E{\vxbar^{\to}_0\given \vy_i}+\sqrt{\eta_i}\xi_{i+1} \]
    for some coefficients $\alpha_i, \beta_i$. In many implementations of DDPM, the model is trained to output this $\E{\vxbar^{\to}_0\given \vy_i}$ directly instead of $f$. At step $a$, to speculate the means of future steps $i$, we simply need to plugin $\E{\vxbar^{\to}_0\given \vy_a}$ for $\E{\vxbar^{\to}_0\given \vy_i}$ in the above formula.
\end{remark}

\textbf{Parallelization in \cref{alg:AutoSpecSL}} In each iteration, ASD makes one model call in line~\ref{ln:model-call-1} to compute the proposal means and samples, and one \emph{parallel} round of model calls in line~\ref{ln:model-call-2} to speculate the target means. Beyond this, the remaining internal computation of ASD is also highly parallelizable. In particular, each for loop in \cref{alg:AutoSpecSL} and \cref{alg:ParallelReject} is parallelizable and \cref{alg:RejectionSampler} takes $O(1)$ time. 

\section{Theoretical Guarantees}
\label{sec:theory-guarantees}
In this section, we present our theoretical guarantees for \cref{alg:AutoSpecSL}. Our first result, which is proved in \cref{sec:app-autospec-correctness}, guarantees that ASD is an \emph{error-free parallelization method}, i.e. it always samples exactly from the target distribution. 
\begin{theorem}[Correctness of  \cref{alg:AutoSpecSL}]
\label{thm:autospec-correctness}
\cref{alg:AutoSpecSL} terminates in at most $K$ steps and its outputs are always distributed exactly according to the target distribution of the stochastic process \eqref{eqn:euler-sde}. 
\end{theorem}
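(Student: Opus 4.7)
The theorem has two parts: termination in at most $K$ iterations, and exact correctness of the output law. Termination follows by observing that each execution of the while loop strictly advances the counter $a$: in the no-rejection branch the Verifier returns $j = b$, so $a$ advances to $b = \min(K, a+\theta) \geq a+1$; in the first-rejection branch the algorithm advances $a$ to $j+1 \geq a+1$ after installing the reflection sample at slot $j+1$. Since $a$ is integer-valued on $\{0,\ldots,K\}$ and strictly increases each iteration, the loop terminates in at most $K$ iterations.

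\textbf{Marginal lemma for GRS.} The heart of the correctness argument is a single lemma about the Gaussian Rejection Sampler (Algorithm~\ref{alg:RejectionSampler}): for any fixed $\hvm, \vm, \sigma^2$, when $(u,\xi) \sim \mathrm{Uniform}([0,1])\times\Normal{0,\vI}$, the output $\vx$ is marginally distributed as $\Normal{\vm, \sigma^2 \vI}$. I would prove this by a direct reflection/maximal-coupling calculation in the spirit of \citet{bou2020coupling}. The ratio inside the acceptance indicator is precisely the target-to-proposal density ratio evaluated at the proposal point $\hvm + \sigma\xi$, so integrating over $(u,\xi)$ on the accept branch yields the pointwise density $\min(p_{\text{target}}, p_{\text{proposal}})$. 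The Householder reflection $\xi \mapsto \xi - 2(\langle \vv, \xi\rangle/\|\vv\|^2)\vv$ is an involution preserving $\Normal{0,\vI}$, and a change-of-variables shows that the reject-branch density is exactly $(p_{\text{target}} - p_{\text{proposal}})^+$. These sum to $p_{\text{target}}$, establishing the marginal.

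\textbf{Inductive correctness of the outer loop.} I would then induct on iterations of the while loop with the invariant that at the start of the iteration with counter value $a$, the tuple $(\vy_0,\ldots,\vy_a)$ has the joint law of the first $a+1$ iterates of the target process \eqref{eqn:euler-sde}. Fix such an iteration and condition on $\vy_a$. The key structural observation is that along any sample path where $b_{a+1},\ldots,b_i = \mathrm{True}$, the GRS outputs coincide with the speculated samples, i.e., $\vz_k = \hvy_k$ for $k \in \{a+1,\ldots,i\}$. Consequently, the target means $\vm_{k+1}$ computed in parallel are exactly the one-step-ahead means of \eqref{eqn:euler-sde} evaluated along the correctly propagated trajectory $(\hvy_a,\ldots,\hvy_i)$. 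Combining this with the GRS marginal lemma, the accepted prefix $(\vy_{a+1},\ldots,\vy_j)$---together with the reflection sample at slot $j+1$ when a rejection occurs---has the correct target block-law given $\vy_a$, restoring the invariant for the new counter value. An outer induction across iterations then yields the exact distributional equality for the final output.

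\textbf{Main obstacle.} The chief subtlety I anticipate is handling the shared randomness $(u_i, \xi_i)$ cleanly: the same $\xi_{i+1}$ both drives the proposal sample $\hvy_{i+1}$ and is fed into GRS to verify it. Rather than conditioning on the acceptance pattern (which entangles the two roles of $\xi$), the cleanest route is to treat $(u_\cdot,\xi_\cdot)$ as a common source of randomness and argue the marginal law of $\vy_\cdot$ pathwise, invoking a \emph{pointwise-in-$\xi$} version of the GRS lemma so that the law of $\vz_k$ is correct on every sample path where the previous verifications agreed. A secondary care point is aligning the Verifier's returned index $j$ with the outer algorithm's ``advance until first rejection'' branching so that the reflection sample at the first rejected slot is always retained.
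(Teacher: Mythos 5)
Your proposal is correct and follows essentially the same route as the paper: a reflection-coupling marginal lemma for the Gaussian rejection sampler (the paper's \cref{thm:grs-correctness}) combined with an induction over while-loop iterations maintaining that the accepted prefix $(\vy_0,\dots,\vy_{a})$ carries the exact target law (the paper's \cref{lem:a_t_lemma}). The only cosmetic difference is that the paper derives strict progress of $a$ from the observation that $\hvm_{a+1}=\vm_{a+1}$ forces the first speculation to be accepted with probability one, whereas you read termination off the index bookkeeping; both are valid.
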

We now analyze the parallel runtime of \cref{alg:AutoSpecSL} for the Euler discretization of the SL process, and prove the following bound on the adaptive complexity, i.e., the number of parallel model calls, in \cref{sec:app-autospec-adaptive-bound}. While our proof is specific to the SL process, our result implies a similar guarantee for DDPMs due to its equivalence to SL, as discussed in \cref{subsec:hidden-exchangeability} 
\begin{theorem}[Adaptive Complexity of \cref{alg:AutoSpecSL}]
\label{thm:autospec-adaptive-complexity}
Suppose the data distribution satisfies $\Tr(\Cov[\mu]) \leq \beta d$ and the step-sizes of the DDPM satisfy $\eta_k \leq \eta$. Then, for $\theta \asymp (\nicefrac{K}{\beta \eta d})^{\nicefrac{1}{3}}$,  \cref{alg:AutoSpecSL} makes at most $O(K^{\nicefrac{2}{3}} (\beta d \eta)^{\nicefrac{1}{3}})$ parallel calls to the DDPM in expectation. In fact, for any $\delta \in (0,1)$, the number of parallel calls to the DDPM is bounded by $O(K^{\nicefrac{2}{3}} (\beta d \eta)^{\nicefrac{1}{3}} \log(\nicefrac{1}{\delta}))$ with probability at least $1 - \delta$.
\end{theorem}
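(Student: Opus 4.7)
The plan is to control the number $T$ of while-loop iterations in \cref{alg:AutoSpecSL}, since each iteration uses only $O(1)$ parallel rounds of model calls. I would decompose $T = T_s + T_u$ into \emph{successful} iterations (all $\theta$ speculations accepted, advancing by exactly $\theta$) and \emph{unsuccessful} ones (the first rejection in the block forcing a shorter advance). Since successful iterations contribute $\theta$ each toward the $K$ finalized steps, $T_s \leq K/\theta$; and each unsuccessful iteration can be charged to its unique first rejection, so $T_u \leq R$ where $R$ is the total count of first rejections across the run. The main work is then to bound $\bE[R]$ and tune $\theta$ to balance $K/\theta$ against this bound.

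To bound $\bE[R]$, I would note that rejection at a speculated position $j$ inside its finalizing iteration (with start $a_j$) occurs with probability at most the TV between the proposal Gaussian $\Normal{\hvm_j, \sigma_j^2 \vI}$ and target Gaussian $\Normal{\vm_j, \sigma_j^2 \vI}$. Since these share a covariance, the standard Gaussian TV bound yields rejection probability $\lesssim \sqrt{\eta}\,\|\vm(t_{a_j}, \vy_{a_j}) - \vm(t_{j-1}, \hvy_{j-1})\|$ (using $\sigma_j \asymp \sqrt{\eta}$ for the Euler SL scheme). The key structural tool is the martingale representation of the SL posterior mean, $\d \vm_t = C_t \, \d B_t$ with $C_t = \Cov(\vx^\star \mid \vybar_t)$, which combined with the martingale convergence $\vm_\infty = \vx^\star$ gives the global identity $\int_0^\infty \bE[\Tr(C_s^2)] \, \d s = \Tr(\Cov[\mu]) \leq \beta d$. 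The hidden exchangeability property (\cref{thm:sl-exchangeability}) is what allows me to port this SL-based martingale analysis to the proposal trajectory appearing in the algorithm, yielding $\bE[\|\vm(t_{a_j}, \vy_{a_j}) - \vm(t_{j-1}, \hvy_{j-1})\|^2] \lesssim \bE\bigl[\int_{t_{a_j}}^{t_{j-1}} \Tr(C_s^2) \, \d s\bigr]$.

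Summing globally over $j \in [K]$, a single application of Jensen and Cauchy--Schwarz gives $\bE[R] \leq \sqrt{\eta} \sum_j \sqrt{\bE[\int_{t_{a_j}}^{t_{j-1}} \Tr(C_s^2) \d s]} \leq \sqrt{\eta K \cdot \sum_j \bE[\int_{t_{a_j}}^{t_{j-1}} \Tr(C_s^2) \d s]}$. A covering argument uses that each time $s$ lies in at most $\theta$ of the intervals $[t_{a_j}, t_{j-1}]$: any such $j$ must belong to the single iteration whose speculation block contains $s$, and that block has only $\theta$ positions. Hence $\sum_j \bE[\int_{t_{a_j}}^{t_{j-1}} \Tr(C_s^2) \, \d s] \leq \theta \beta d$, which gives $\bE[R] \lesssim \sqrt{\eta K \theta \beta d}$. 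Combining with $T \leq K/\theta + R$ and balancing at $\theta \asymp (K/(\eta \beta d))^{1/3}$ yields $\bE[T] = O(K^{2/3}(\eta \beta d)^{1/3})$.

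For the high-probability guarantee, I would show that in each iteration the per-iteration sum of rejection probabilities is at most a constant with constant probability (by Markov applied to its conditional expectation), so that the advance is $\geq \theta/2$ with constant probability; a Chernoff-style concentration over iterations then introduces the extra $\log(1/\delta)$ factor. The main obstacle is the global Cauchy--Schwarz bookkeeping: a naive per-iteration application of the martingale bound only delivers $T \lesssim K^{3/4}$, and achieving the sharper $K^{2/3}$ rate crucially requires summing the TV contributions simultaneously across all $K$ steps while exploiting the multiplicity-$\theta$ covering of the time axis, so that the global quadratic-variation budget $\beta d$ is amortized with multiplicity exactly $\theta$ rather than $\theta^2$.
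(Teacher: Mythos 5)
Your bound on the expected number of rounds follows essentially the same skeleton as the paper's proof of \cref{thm:autospec-expected-complexity}: charge each unsuccessful iteration to its unique first rejection, bound the rejection probability at position $i$ by the total variation between the proposal and target Gaussians (which share covariance $\sigma_i^2\vI=\eta_i\vI$, so the TV reduces to a function of the mean difference), control that mean difference by the quadratic variation of the SL posterior-mean martingale $\d\vm_t=\Sigma_t\,\d W_t$ whose total budget is $\Tr(\Cov[\mu])\le\beta d$, and close with a single global Cauchy--Schwarz plus the multiplicity-$\theta$ covering of the time axis to obtain $\bE[R]\lesssim \nicefrac{K}{\theta}+\sqrt{K\theta\eta\beta d}$, balanced at $\theta\asymp(\nicefrac{K}{\eta\beta d})^{\nicefrac{1}{3}}$. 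Two steps are glossed over, however. First, the iterates $\vy_j$ live on the \emph{Euler discretization} of SL, not on the continuous SL path, so the Ito-isometry identity $\bE[\|\vm(t_{a_j},\vy_{a_j})-\vm(t_{j-1},\vy_{j-1})\|^2]\lesssim\bE[\int\Tr(\Sigma_s^2)\,\d s]$ cannot be invoked directly; the paper inserts auxiliary continuous processes and pays two additional discretization terms, bounded via the data-processing inequality, Pinsker, and Girsanov, each of which is again of order $\eta\,\Tr(\bE[\Sigma_\cdot]-\bE[\Sigma_\cdot])$ and absorbed into the same budget. Second, the start index $a_j$ of the iteration that speculates position $j$ is random and correlated with the trajectory; the paper replaces it by the maximum over the deterministic window $[j-\theta,j-1]$ and controls that maximum with Doob's $L^2$ maximal inequality, whereas your covering argument implicitly applies the isometry at the random time $a_j$. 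Both issues are fixable along the lines you indicate (monotonicity of the nonnegative integrand over $[t_{a_j},t_{j-1}]\subseteq[t_{j-\theta},t_{j-1}]$ does most of the work), but they are not free, and "hidden exchangeability ports the analysis" is not by itself a substitute for them.

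The high-probability argument, as sketched, does not go through. The global bound $\sum_i\bE\bigl[\TV{q(\vy_i|\vy_a)}{p(\vy_i|\vy_a)}^2\bigr]\lesssim\theta\eta\beta d$ is a budget that may concentrate on a small number of iterations (e.g., early times where $\Sigma_t$ is large), so there is no uniform per-iteration constant on which to base the claim that each iteration advances by $\ge\nicefrac{\theta}{2}$ with constant probability, and the advances across iterations are neither independent nor uniformly bounded below in conditional expectation, which blocks the Chernoff step. The paper instead uses a restart argument adapted from Theorem 28 of \citet{anari2024parallel}: Markov's inequality gives failure probability $\nicefrac{1}{2}$ for a budget of $MK^{\nicefrac{2}{3}}(\eta\beta d)^{\nicefrac{1}{3}}$ rounds, and conditioned on the event that the algorithm sits at index $i$ after exhausting that budget, the remainder of the run is distributed as a fresh run on the suffix $(\vy_l)_{l>i}$; induction on $c$ then yields $\bP[R>cMK^{\nicefrac{2}{3}}(\eta\beta d)^{\nicefrac{1}{3}}]\le 2^{-c}$, and $c\asymp\log(\nicefrac{1}{\delta})$ gives the stated tail. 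You would need to replace your concentration step with some such renewal/doubling structure.
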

\textbf{$\Tilde{O}(K^{\nicefrac{1}{3}})$ Parallel Speedup:} In accordance with prior works analyzing DDPMs, $\eta d = O(1)$ is usually required to ensure convergence to the target distribution \cite{chen2022sampling, benton2023linear}. Moreover, $\beta$ is typically $O(1)$. Under this setting, the number of parallel DDPM calls made by \cref{alg:AutoSpecSL} is $O(K^{\nicefrac{2}{3}})$ which represents a $O(K^{\nicefrac{1}{3}})$ parallel speedup over the vanilla sequential DDPM.

\textbf{Comparison to Prior Works:} The work of \cite{shih2024parallel} proposed a parallel algorithm for DDPMs based on Picard iterations but did not provide any theoretical guarantees. Along similar lines, the works of \cite{gupta2024faster,chen2024accelerating} combined Picard iterations with the Randomized Midpoint Method to design a parallel algorithm for DDPMs with $O(\mathsf{polylog}(d))$ parallel runtime, assuming bounded second moments and $O(1)$ uniform Lipschitzness of the score function, a stringent assumption which is satisfied for very restricted distribution families. As discussed before, these approaches are \emph{not} error-free parallelization schemes due to the use of Picard iterations. On the contrary, ASD is a perfect sampler with guaranteed speedups even in the absence of any smoothness assumptions on the score function. 

\textbf{From Adaptive Complexity to Parallel Runtime:} While \cref{thm:autospec-adaptive-complexity} upper bounds the number of parallel calls to the DDPM, we note that the parallel runtime (i.e., time taken on a PRAM) of the Algorithm satisfies the same guarantee modulo logarithmic factors. This is because the internal computation of \cref{alg:AutoSpecSL} is easily parallelizable. For instance, the proposal and the target means can be computed in $\Tilde{O}(1)$ parallel time via prefix sums, and the for loop in \cref{alg:ParallelReject} is also parallelizable. 
\section{Experiments} \label{sec:experiments}

\begin{figure}[t]
\Tikz*{
	\begin{axis}[
		title={\textbf{Speedup on Latent Diffusion Model}},
		ybar,
		width=\linewidth,
		height=0.75\linewidth,
		bar width=0.1\linewidth,
		bar shift=0pt,
		xmin=0.5,
		xmax=5.5,
		ymin=0,
		ymax=2.5,
		ylabel={Speedup},
		xlabel={Speculation Length (Number of Parallel GPUs)},
		xtick={1,2,3,4,5},
		xticklabels={2,4,6,8,inf},
		ytick={0, 1, 2},
		axis x line=bottom,
		axis y line=left,
		tick style={draw=none},
		axis line style={-, line width=1pt},
		legend style={
			at={(0.05, 1)},
			anchor=north west,
			draw=none,
			fill=none
		},
		nodes near coords={
			\pgfmathprintnumber[fixed zerofill, precision=2]{\pgfplotspointmeta}
		},
		cycle list={
			{fill=Blue, draw=Blue},
			{fill=Yellow, draw=Yellow}
		}
	]
		\addplot+[
			nodes near coords align={above}
		] coordinates {
			(1, 1.51)
			(2, 1.81)
			(3, 1.87)
			(4, 1.89)
			(5, 1.90)
		};
		\addlegendentry{Algorithmic}
		\addplot+[
			nodes near coords align={below}
		] coordinates {
			(1, 1.51)
			(2, 1.77)
			(3, 1.82)
			(4, 1.79)
		};
		\addlegendentry{Wall-clock}
	\end{axis}
}
%\centering
%\includegraphics[width=0.9\linewidth]{}
\caption{Speedup of ASD over DDPM on StableDiffusion-v2 with different speculation length $\theta$. The \emph{algorithmic} speedup measures the reduction in the number of calls to the noise prediction network while \emph{wall-clock} speedup measures the reduction in wall-clock time of the denoising process by running ASD.
}
\label{fig:ldm-speed-up}
\end{figure}

\begin{table}[t]
\centering
\small
\begin{tabular}{c | c c c c c}
\toprule
DDPM & ASD-2 & ASD-4 & ASD-6 & ASD-8 & ASD-$\infty$ \\
32.1 &  32.2  & 32.0 & 32.2 & 32.3 & 32.1\\
\bottomrule
\end{tabular}
\caption{\textbf{CLIP score} (the higher the better) of images generated from DDPM and ASD-$\theta$ with different speculation length $\theta$s.The CLIP scores are computed over 1000 captions from the COCO2017 captions validation dataset. ASD does not affect the quality of generated images.}
\label{tab:clip-ldm}
%\vspace{-6mm}
\end{table}

\begin{figure}[hbt]
\Tikz*{
	\begin{axis}[
		title={\textbf{Speedup on Pixel Diffusion Model}},
		ybar,
		width=\linewidth,
		height=0.75\linewidth,
		bar width=0.1\linewidth,
		bar shift=0pt,
		xmin=0.5,
		xmax=4.5,
		ymin=0,
		ymax=4,
		ylabel={Speedup},
		xlabel={Speculation Length (Number of Parallel GPUs)},
		xtick={1,2,3,4},
		xticklabels={4,6,8,inf},
		ytick={0,1,2,3},
		axis x line=bottom,
		axis y line=left,
		tick style={draw=none},
		axis line style={-, line width=1pt},
		legend style={
			at={(0.05, 1)},
			anchor=north west,
			draw=none,
			fill=none
		},
		nodes near coords={
			\pgfmathprintnumber[fixed zerofill, precision=2]{\pgfplotspointmeta}
		},
		cycle list={
			{fill=Blue, draw=Blue},
			{fill=Yellow, draw=Yellow}
		}
	]
		\addplot+[
			nodes near coords align={above}
		] coordinates {
			(1, 2.46)
			(2, 2.76)
			(3, 2.92)
			(4, 3.10)
		};
		\addlegendentry{Algorithmic}
		\addplot+[
			nodes near coords align={below}
		] coordinates {
			(1, 2.13)
			(2, 2.30)
			(3, 2.51)
		};
		\addlegendentry{Wall-clock}
	\end{axis}
}

%    \centering
%    \includegraphics[width=0.9\linewidth]{}
    \caption{Speedup of ASD over DDPM using diffusion model from~\citet{ho2020denoising}. The model directly generates images with resolution $3\!\times\!256\!\times\!256$. ASD achieves up to $3.1\times$ \emph{algorithmic} speedup. However, compared to LDM, the overhead of data transfer between processes/GPUs is bigger while the computation cost is cheaper, leading to a wider gap between \emph{wall-clock} and \emph{algorithmic} speedup.}
    \label{fig:pixel-speedup}
%\vspace{-6mm}
\end{figure}

\begin{table}[]
    \centering
    \begin{tabular}{c | c c c c c}
    \toprule
    DDPM &  ASD-4 & ASD-6 & ASD-8 & ASD-$\infty$ \\
    14.3 &  13.2  & 13.2 & 13.2 & 14.3\\
    \bottomrule
    \end{tabular}
    \caption{\textbf{FID score} (the lower the better) of images sampled from DDPM and ASD with different speculation length. The underlying diffusion model is trained on the LSUN Church dataset. Each score is computed with 5000 image samples. Samples from ASD have the same quality as the ones from non-speculative DDPM.}
    \label{tab:fid-church}
%    \vspace{-6mm}
\end{table}

In the experiments, we empirically demonstrate the practical benefits of autospeculative decoding (ASD). We focus on two key properties of ASD: 1) accelerating inference through multi-step speculation and parallel verification and 2) producing truthful samples of the underlying diffusion models. We consider a diverse set of real-world applications where diffusion models are used, including image generation with both latent- and pixel-space diffusion models ~\cite{rombach2022high, ho2020denoising} and robot control with diffusion policies~\cite{reuss2023goal-diffuse, chi2023diffusion}.
ASD offers 2-7$\times$ reduction in the number of sequential neural network predictions via speculation and our practical implementation achieves 1.8-4$\times$ speedup measured in wall-clock time. We emphasize that our goal is \emph{not} to claim any state-of-the-art acceleration results for diffusion models but to provide empirical evaluations complementary to our theoretical contributions that enables fast, exact sampling without approximation.

To measure the speedup of ASD over vanilla DDPM, we consider two scenarios referred to as \textbf{\emph{algorithmic}} and \textbf{\emph{wall-clock}} throughout the paper. For the \emph{algorithmic} speedup, we divide the total number of denoising steps (e.g. 1000 in DDPM for image generation and 100 in diffusion policy for robot control) by the number of times ASD invokes the internal neural network to make predictions. This mainly helps us understand the rate at which ASD progresses. It also corresponds to the ideal speedup assuming perfect parallelization where parallel predictions take the same amount of time as a single prediction, and ignoring the minor overheads of other non-neural network computations such as speculation and rejection sampling. The \emph{wall-clock} speedup measures the reduction in the wall-clock time of full denoising loops, accounting for all extra overheads such as data transferring cost when parallelizing over multiple GPUs or the additional time for batched prediction.

\subsection{Accelerating Diffusion for Image Generation}

\begin{figure*}
    \centering
    % Subfigure 1
    \begin{subfigure}[t]{0.23\textwidth}
        \centering
        \includegraphics[width=\textwidth]{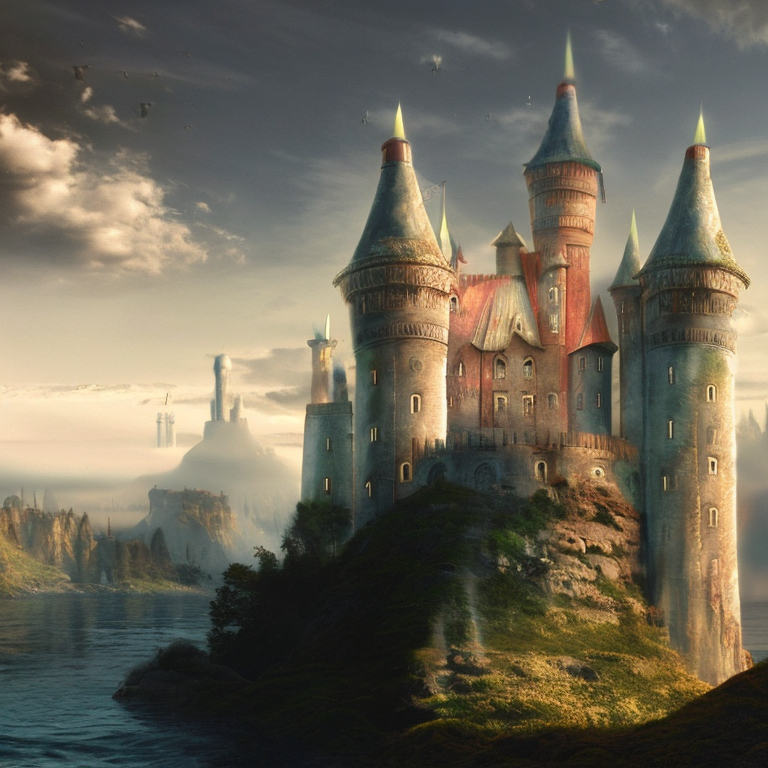} % Corrected filename with extension
        \caption{DDPM: a beautiful castle, matte painting}
        % \label{fig:sparkle}
    \end{subfigure}
    \hfill
    % Subfigure 2
    \begin{subfigure}[t]{0.23\textwidth}
        \centering
        \includegraphics[width=\textwidth]{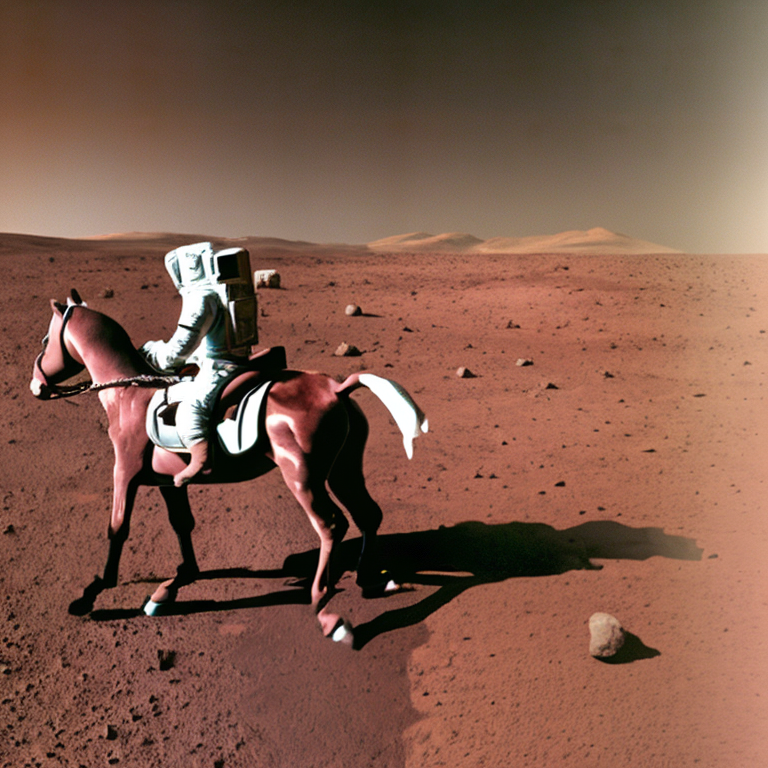}
        \caption{DDPM: a photo of an astronaut riding a horse on mars}
        % \label{fig:sub2}
    \end{subfigure}
    \hfill
    % Subfigure 3
    \begin{subfigure}[t]{0.23\textwidth}
        \centering
        \includegraphics[width=\textwidth]{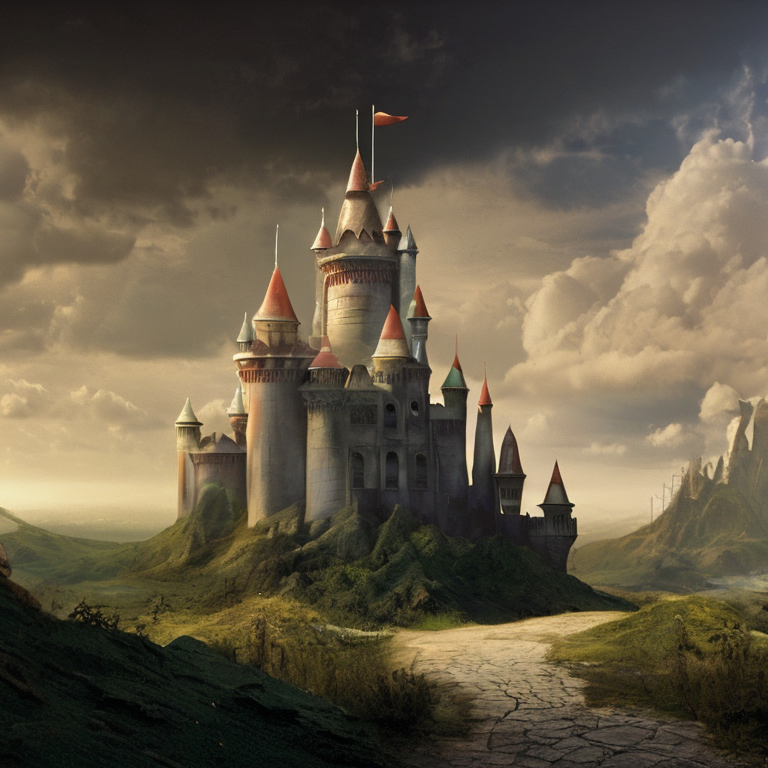}
        \caption{ASD-$\infty$: a beautiful castle, matte painting}
        % \label{fig:sub3}
    \end{subfigure}
    \hfill
    % Subfigure 4
    \begin{subfigure}[t]{0.23\textwidth}
        \centering
        \includegraphics[width=\textwidth]{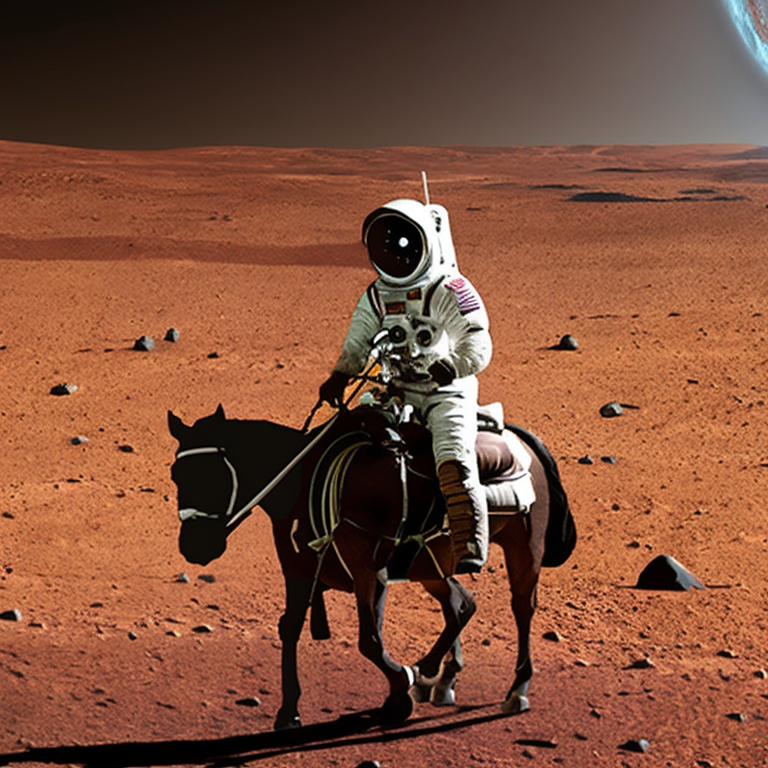}
        \caption{ASD-$\infty$: a photo of an astronaut riding a horse on mars}
        % \label{fig:sub4}
    \end{subfigure}    
    \caption{Image generated by StableDiffusion-v2 with different sampling method, DDPM and ASD-$\infty$.}
    \label{fig:stable-diffusion}
%\vspace{-2mm}
\end{figure*}

\begin{figure*}[!ht]
	\begin{Columns}<3>
	\scriptsize
	\Column
	\Tikz*{
		\begin{axis}[
			title={Square},
			ybar,
			width=1.1\linewidth,
			height=\linewidth,
			bar width=0.08\linewidth,
			bar shift=0pt,
			xmin=0.5,
			xmax=6.5,
			ymin=0,
			ymax=8,
			ylabel={Speedup},
			xlabel={Speculation Length},
			xtick={1,2,3,4,5,6},
			xticklabels={8,12,16,20,24,inf},
			ytick={0,1,2,3,4,5,6,7},
			axis x line=bottom,
			axis y line=left,
			tick style={draw=none},
			axis line style={-, line width=1pt},
			legend style={
				at={(0.05, 1)},
				anchor=north west,
				draw=none,
				fill=none
			},
			nodes near coords={
				\pgfmathprintnumber[fixed zerofill, precision=1]{\pgfplotspointmeta}
			},
			cycle list={
				{fill=Blue, draw=Blue},
				{fill=Yellow, draw=Yellow}
			}
		]
			\addplot+[
				nodes near coords align={above}
			] coordinates {
				(1, 4.7)
				(2, 5.4)
				(3, 5.7)
				(4, 5.9)
				(5, 6.0)
				(6, 6.0)
			};
			\addlegendentry{Algorithmic}
			\addplot+[
				nodes near coords align={below}
			] coordinates {
				(1, 3.4)
				(2, 3.7)
				(3, 3.8)
				(4, 3.8)
				(5, 3.7)
			};
			\addlegendentry{Wall-clock}
		\end{axis}
	}
	\Column
	\Tikz*{
		\begin{axis}[
			title={Transport},
			ybar,
			width=1.1\linewidth,
			height=\linewidth,
			bar width=0.08\linewidth,
			bar shift=0pt,
			xmin=0.5,
			xmax=6.5,
			ymin=0,
			ymax=8,
			ylabel={Speedup},
			xlabel={Speculation Length},
			xtick={1,2,3,4,5,6},
			xticklabels={8,12,16,20,24,inf},
			ytick={0,1,2,3,4,5,6,7},
			axis x line=bottom,
			axis y line=left,
			tick style={draw=none},
			axis line style={-, line width=1pt},
			legend style={
				at={(0.05, 1)},
				anchor=north west,
				draw=none,
				fill=none
			},
			nodes near coords={
				\pgfmathprintnumber[fixed zerofill, precision=1]{\pgfplotspointmeta}
			},
			cycle list={
				{fill=Blue, draw=Blue},
				{fill=Yellow, draw=Yellow}
			}
		]
			\addplot+[
				nodes near coords align={above}
			] coordinates {
				(1, 5.0)
				(2, 5.8)
				(3, 6.2)
				(4, 6.5)
				(5, 6.6)
				(6, 6.6)
			};
			\addlegendentry{Algorithmic}
			\addplot+[
				nodes near coords align={below}
			] coordinates {
				(1, 3.5)
				(2, 3.7)
				(3, 3.8)
				(4, 3.8)
				(5, 3.9)
			};
			\addlegendentry{Wall-clock}
			\legend{}
		\end{axis}
	}
	\Column
	\Tikz*{
		\begin{axis}[
			title={Tool Hang},
			ybar,
			width=1.1\linewidth,
			height=\linewidth,
			bar width=0.08\linewidth,
			bar shift=0pt,
			xmin=0.5,
			xmax=6.5,
			ymin=0,
			ymax=8,
			ylabel={Speedup},
			xlabel={Speculation Length},
			xtick={1,2,3,4,5,6},
			xticklabels={8,12,16,20,24,inf},
			ytick={0,1,2,3,4,5,6,7},
			axis x line=bottom,
			axis y line=left,
			tick style={draw=none},
			axis line style={-, line width=1pt},
			legend style={
				at={(0.05, 1)},
				anchor=north west,
				draw=none,
				fill=none
			},
			nodes near coords={
				\pgfmathprintnumber[fixed zerofill, precision=1]{\pgfplotspointmeta}
			},
			cycle list={
				{fill=Blue, draw=Blue},
				{fill=Yellow, draw=Yellow}
			}
		]
			\addplot+[
				nodes near coords align={above}
			] coordinates {
				(1, 5.1)
				(2, 6.0)
				(3, 6.4)
				(4, 6.7)
				(5, 6.9)
				(6, 7.0)
			};
			\addlegendentry{Algorithmic}
			\addplot+[
				nodes near coords align={below}
			] coordinates {
				(1, 3.5)
				(2, 3.8)
				(3, 3.9)
				(4, 4.0)
				(5, 4.1)
			};
			\addlegendentry{Wall-clock}
			\legend{}
		\end{axis}
	}
	\end{Columns}
%    \centering
%    \includegraphics[width=0.9\linewidth]{}
    \caption{Speedup from ASD of diffusion policies in three Robomimic tasks. Unlike the image generation experiments where we parallelize across multiple GPUs, we only use \emph{one GPU} in these experiments and the parallel verification is achieved through batching.}
    \label{fig:robomimic-speed-up}
%    \vspace{-2mm}
\end{figure*}

\begin{table*}[!ht]
\centering
\small
\begin{tabular}{l c | c c c c c c }
\toprule
% \multirow{2}{*}{Env} & \multicolumn{7}{c}{Speculation Step} \\
% \cmidrule(lr){2-8}
Env & DDPM & ASD-8 & ASD-12 & ASD-16 & ASD-20 & ASD-24 & ASD-$\infty$ \\
\midrule
Square & 88.67 $\pm$ 0.27 & 93.00 $\pm$ 1.70 & 91.67 $\pm$ 1.66 & 93.00 $\pm$ 0.47 & 93.00 $\pm$ 0.47 & 90.00 $\pm$ 1.89 & 90.33 $\pm$ 0.27 \\
Transport & 89.00 $\pm$ 1.25 & 90.00 $\pm$ 1.25 & 91.33 $\pm$ 1.52 & 90.00 $\pm$ 0.47 & 91.00 $\pm$ 0.00 & 90.00 $\pm$ 0.82 & 90.00 $\pm$ 1.25 \\
Tool Hang & 70.00 $\pm$ 2.94 & 69.33 $\pm$ 2.60 & 66.00 $\pm$ 2.62 & 71.00 $\pm$ 2.49 & 70.00 $\pm$ 2.36 & 68.67 $\pm$ 1.19 & 73.67 $\pm$ 0.27 \\
\bottomrule
\end{tabular}
\caption{Performance on Robomimic Tasks. ASD with different speculation length achieves similar performance as the vanilla DDPM. The number in each cell is obtained by evaluating the diffusion policy on 100 random seeds (random initial configurations of the scene) and repeating 3 times. We report the mean $\pm$ standard error of mean.}
\label{tab:robomimic-perf}
%\vspace{-4mm}
\end{table*}

\textbf{Latent Diffusion Models:} We first evaluate ASD on image generation with latent diffusion models (LDM). Specifically, we use open-sourced StableDiffusion-v2~\cite{rombach2022high, schuhmann2022laionb} model from the diffusers~\cite{von-platen-etal-2022-diffusers} library. To implement ASD with parallelization, we first produce $\theta$ proposals following lines~\ref{ln:proposal-mean}-\ref{ln:proposal-sample} in \cref{alg:AutoSpecSL} and then distribute the $\theta$ model prediction steps in line~\ref{ln:model-call-2} over $\theta$ GPUs with multi-processing. Finally, we transfer the results back to the main process and run the acceptance and rejection sampling of lines~\ref{ln:accept-reject-begin}-\ref{ln:accept-reject-end}. Similar to~\citet{shih2024parallel}, we choose to implement parallel prediction through multiple GPUs instead of batching because the network is large enough that the time it takes to make a prediction grows linearly with the batch size even when the batch is small. In cases where the network is smaller, such as in the robot control experiments discussed later, batching may still be a valid solution to achieve high acceleration without incurring additional hardware cost.

\cref{fig:ldm-speed-up} shows the \emph{algorithmic} and \emph{wall-clock} speedup of ASD relative to DDPM under 1000 denoising steps. We evaluate ASD with different speculation length $\theta$ including infinity to understand the upper limit of ASD. As expected, longer speculation length in ASD leads to higher \emph{algorithmic} speedup as it allows for more parallelism. ASD-$\infty$ achieves 1.9$\times$ speedup, which corresponds to reducing the number of sequential predictions by nearly a factor of 2 when counting predictions that happen in parallel as 1. In practice, $\theta = 6$ or $\theta = 8$ is sufficient to achieve a similar \emph{algorithmic} speedup as ASD-$\infty$. We measure the \emph{wall-clock} speedup on a machine with 8 NVIDIA A40 GPUs. Our implementation achieves a peak speedup of 1.82$\times$ with $\theta=$ 6. In our specific implementation and hardware setup, the overhead of creating and transferring data for more speculation steps outweighs the marginal benefit of bigger $\theta$.

To verify that ASD produces truthful samples from the underlying distribution, we compute the CLIP score~\cite{hessel2021clipscore} using 5000 images generated with languages from COCO2017 captions validation dataset. Results in~\cref{tab:clip-ldm} show that the samples from ASD have the same quality as the ones generated by original DDPM. \cref{fig:stable-diffusion} shows the generated images from vanilla DDPM and ASD-$\infty$ using the same prompts side by side.

\textbf{Pixel Diffusion Models:} We also evaluate ASD on the LSUN Church model from ~\citet{ho2020denoising}, which directly generates images of resolution $3 \times 256 \times 256$. We use the same ASD implementation as in the latent diffusion case with up to 8 GPUs for parallel computation. ~\cref{fig:pixel-speedup} shows the \emph{algorithmic} and \emph{wall-clock} speedup of ASD on this model. ASD achieves higher speedup here than in the previous latent diffusion model, reducing the number of neural network prediction calls up to $3.1\times$. However, we also notice that the gap between the \emph{wall-clock} and \emph{algorithmic} speedup is more significant in this case for two reasons. First, despite being a pixel space diffusion model, the computation cost per forward of this specific model is actually $50\%$ cheaper than the latent model from the previous section. Second, the overhead of transferring inputs and predictions between processes is roughly $10\times$ higher due to higher resolution and higher floating point precision. These two factors lead to a more prominent overhead and thus bigger gap between \emph{wall-clock} and \emph{algorithmic} speedup than in previous experiment. A faster implementation on systems with faster inter-connection between GPUs may further close the gap.
The FID scores in \cref{tab:fid-church}, computed over 5,000 samples for each method, confirm that ASD produces samples of the same quality as DDPM.

\subsection{Accelerating Diffusion for Robot Control}

Diffusion models have become a popular method in learning robot control policies from demonstrations~\cite{chi2023diffusion, reuss2023goal-diffuse}. It models the conditional distribution of an action sequence $\pi(a_{t:t+k} | o_t)$ where $o_t$ is the observations from cameras attached to the robot. Conceptually, the process can be viewed as generating a small 2D vector of size $k \times d$ where $k$ is the length of the action sequence and $d$ is the action dimension. We consider three hard Robomimic~\cite{robomimic2021} simulation environments namely Square, Transport and Tool Hang. We follow prior works to set $k=16$ in all environments. The action dimension $d=7$ in the single arm Square and Tool Hang tasks and $d=14$ in the bi-manual Transport task.

The denoising neural network in the diffusion policies is considerably more lightweight than the ones from image diffusion models in the previous section. Therefore, we opt for a batching implementation for the parallel prediction step instead of parallelizing over multiple GPUs. We simply batch the proposal samples $\hat{\vy}_i$ and call the network on the batch to predict the noise $\epsilon$ using \emph{one GPU}.

\cref{fig:robomimic-speed-up} shows the speedup results of ASD on diffusion policies in all three tasks, relative to the vanilla DDPM that runs for 100 steps. 
Emperically, we find that ASD has a much higher acceptance rate for the speculated samples in these cases, leading to a 6-7$\times$ \emph{algorithmic} speedup for ASD-$\infty$. Due to the high acceptance rate, it requires a larger speculation length of 20 or 24 to match the efficiency of ASD-$\infty$. The practical implementation achieves roughly 4$\times$ \emph{wall-clock} speedup across the three tasks. The gap between \emph{algorithmic} and \emph{wall-clock} is noticeably larger than the gap in image generation experiments because the overhead for speculation and rejection sampling is more prominent given the cheaper compute cost of the neural network prediction. The extra cost of forwarding the network on batched input over a single input also contributes to the gap. 

We unroll the sampled actions in each environment to verify that the quality of samples from ASD remains the same as the ones from original DDPM. ~\cref{tab:robomimic-perf} summarizes the results. In each environment, we evaluate the same diffusion policy with different sampling schemes over the same set of 100 seeds (100 random initial configurations) and repeat three times to account for other randomness in the evaluation process. ASD variants achieve a similar success rate as vanilla DDPM across all three environments.

\section{Conclusion and Future Work}
\label{sec:conclusion}
We introduce ASD, an error-free parallelization framework for DDPMs that achieves a guaranteed $\Otilde{K^{\nicefrac{1}{3}}}$ parallel speedup under minimal assumptions and delivers 1.8-4$\times$ acceleration in practical domains. Our work establishes a fundamental bridge between efficient inference techniques for LLMs and diffusion models, enabling cross-pollination between these seemingly disjoint areas. Future directions include extending our framework to discrete diffusion models like SEDD \cite{loudiscrete} and developing theoretical guarantees for speculative decoding in language models under realistic assumptions.

\Tag<icml>{
\section*{Impact Statement}
This paper presents work whose goal is to advance the field of Machine Learning. It focuses on theoretical understanding of diffusion models and accelerating inference of diffusion models. It shares the same potential societal consequences with diffusion models, which is a well established subject in ML. There are many potential societal consequences of our work, none of which we feel must be specifically highlighted here.

\section*{Acknowledgment}

This work is supported by funds from NSF-2125511, NSF CCF-2045354 and the Stanford School of Engineering Fellowship.

% \newpage
% \bibliography{refs.bib}
\bibliography{refs}
\bibliographystyle{icml2025}
}

% \Tag{
% \PrintBibliography
% }

\newpage
\appendix
\onecolumn % the appendix can be onecolumn or two columns, up to us
\section{Mathematical Preliminaries}
In this section, we state some standard results from probability that we use
\begin{lemma}[Pinsker's Inequality \cite{van2014probability}]
For any two distributions $P$ and $Q$, 
\begin{align*}
    \TV{P}{Q} \leq \sqrt{\frac{\KL{P}{Q}}{2}}
\end{align*}
\end{lemma}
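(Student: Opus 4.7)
The plan is to reduce the general statement to the two-point (Bernoulli) case via the data processing inequality for KL divergence, and then verify that two-point case by elementary convex analysis.

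First I would invoke the characterization $\TV{P}{Q} = \sup_A |P(A) - Q(A)|$, where $A$ ranges over measurable events. For any such $A$, the indicator map $x \mapsto \mathbf{1}_A(x)$ coarsens $P, Q$ into Bernoulli distributions with parameters $p = P(A)$ and $q = Q(A)$. The data processing inequality for KL (provable from the log-sum inequality, or as a special case of the chain rule) gives $\KL{P}{Q} \geq \KL{\mathrm{Ber}(p)}{\mathrm{Ber}(q)}$. Combining this with the TV characterization, it suffices to establish the scalar bound
\[ 2(p-q)^2 \leq p \log\frac{p}{q} + (1-p)\log\frac{1-p}{1-q} \quad \text{for all } p,q \in [0,1], \]
since the case $\KL{P}{Q} = +\infty$ is trivial and the case $P \ll Q$ reduces the above step to well-defined quantities.

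For the Bernoulli step, I would define $f(p) = p\log(p/q) + (1-p)\log((1-p)/(1-q)) - 2(p-q)^2$, treating $q$ as a fixed parameter. A direct calculation shows that $f(q) = 0$ and $f'(q) = 0$, while
\[ f''(p) = \frac{1}{p} + \frac{1}{1-p} - 4 = \frac{1}{p(1-p)} - 4, \]
which is nonnegative on $(0,1)$ since $p(1-p) \leq \tfrac{1}{4}$ by AM-GM. Hence $f$ is convex on $(0,1)$ with a stationary point at $p = q$, so $f(p) \geq 0$ everywhere; the boundary cases follow by continuity under the convention $0 \log 0 = 0$.

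The main obstacle, such as it is, lies in the degenerate cases of the reduction step (notably when $P \not\ll Q$ or when atoms cause the KL to be infinite), but these are routine since the inequality is vacuous whenever the right-hand side is infinite. Once the reduction to Bernoulli distributions is established, the entire content of the inequality is encoded in the elementary bound $p(1-p) \leq 1/4$, which is what produces the universal constant $1/2$ on the right-hand side.
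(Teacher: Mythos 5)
Your proof is correct. The paper does not actually prove this lemma---it is stated as a standard fact with a citation to \citet{van2014probability}---so there is no in-paper argument to compare against. What you give is the classical proof (reduce to the Bernoulli case via the data processing inequality for KL, then verify the scalar bound $2(p-q)^2 \leq p\log\tfrac{p}{q} + (1-p)\log\tfrac{1-p}{1-q}$ by checking $f(q)=f'(q)=0$ and $f''\geq 0$ using $p(1-p)\leq\tfrac14$), and all steps, including the handling of the degenerate cases where $P \not\ll Q$, are sound.
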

\begin{lemma}[Doob's Maximal Inequality \cite{le2016brownian}]
Let $\vx_1, \dots, \vx_m$ be an $\R^d$ valued submartingale. Then for any $p > 1$,
\begin{align*}
    \bE[\sup_{i \in [m]}\|\vy_i - \vy_0\|^p] \leq \left(\frac{p}{p-1}\right)^p \bE\left[\|\vy_m - \vy_0\|^p\right]
\end{align*}
\end{lemma}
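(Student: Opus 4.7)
The plan is to follow the classical three-step proof of the $L^p$ maximal inequality. First I would reduce to a scalar submartingale: letting $M_i = \|\vy_i - \vy_0\|$ (interpreting the statement's $\vx$'s and $\vy$'s as the same object modulo the typo), the norm is convex and nonnegative, so $M_i$ is a nonnegative submartingale whenever $\vy_i - \vy_0$ is a martingale (and, more generally, a submartingale in the sense the statement intends), via conditional Jensen. After this reduction the inequality becomes the standard scalar $L^p$ Doob inequality $\bE[(M^*)^p] \leq (p/(p-1))^p \bE[M_m^p]$ with $M^* = \max_{i\le m} M_i$.

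Second, I would establish Doob's weak maximal inequality as a lemma: for every $\lambda>0$,
\begin{equation*}
\lambda\, \bP(M^* \ge \lambda) \;\leq\; \bE\bracks{M_m\, \1[M^* \ge \lambda]}.
\end{equation*}
The proof uses the bounded stopping time $\tau = \inf\{i : M_i \ge \lambda\} \wedge m$, together with the optional stopping theorem for the submartingale $M$: on $\{M^* \ge \lambda\}$ we have $M_\tau \ge \lambda$, so $\lambda \1[M^* \ge \lambda] \leq M_\tau \1[M^*\ge\lambda]$, and then $\bE[M_\tau \1[M^*\ge\lambda]] \leq \bE[M_m \1[M^*\ge\lambda]]$ by the submartingale property restricted to $\{M^*\ge\lambda\}$ (split the event into $\{\tau=i\}$ for $i\le m$ and apply the tower law).

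Third, I would integrate up via the layer-cake identity $\bE[(M^*)^p] = \int_0^\infty p\lambda^{p-1}\bP(M^*\ge \lambda)\d\lambda$, substitute the weak maximal inequality, and swap the integral with the expectation by Fubini to obtain
\begin{equation*}
\bE[(M^*)^p] \;\leq\; \frac{p}{p-1}\, \bE\bracks{M_m\, (M^*)^{p-1}}.
\end{equation*}
Applying Hölder's inequality to the right-hand side with conjugate exponents $p$ and $p/(p-1)$ yields $\bE[M_m (M^*)^{p-1}] \leq \bE[M_m^p]^{1/p} \bE[(M^*)^p]^{(p-1)/p}$, and dividing through by $\bE[(M^*)^p]^{(p-1)/p}$ and raising to the $p$th power gives the claimed bound.

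The main subtlety, and the only genuine obstacle, is justifying the division in the final step when $\bE[(M^*)^p]$ might be infinite. I would handle this by first replacing $M^*$ by the truncated maximum $M^* \wedge N$ (the quantity we divide by is then finite and bounded), deriving the inequality for each $N$, and then taking $N \to \infty$ using monotone convergence on the left and dominated convergence on the right (noting that the right-hand side is controlled by the finite quantity $(p/(p-1))^p \bE[M_m^p]$, which also forces the left-hand side to be finite whenever $M_m \in L^p$). This truncation trick is standard and makes the argument fully rigorous without strengthening the integrability hypotheses.
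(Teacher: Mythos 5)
The paper does not prove this lemma; it is stated as a standard preliminary with a citation to \cite{le2016brownian}, so there is no in-paper argument to compare yours against. Your proof is the classical one and is correct: you sensibly resolve the statement's $\vx$-versus-$\vy$ typo and the ambiguity of ``$\R^d$-valued submartingale'' by passing to the nonnegative scalar submartingale $M_i=\|\vy_i-\vy_0\|$ via conditional Jensen, establish the weak maximal inequality $\lambda\,\bP(M^*\ge\lambda)\le\bE\left[M_m\,\mathbf{1}\{M^*\ge\lambda\}\right]$ by optional stopping at the first hitting time, integrate with the layer-cake formula and Fubini, and finish with H\"older. The truncation $M^*\wedge N$ to legitimize dividing by $\bE[(M^*)^p]^{(p-1)/p}$ is exactly the right way to handle the only delicate point, so the argument is complete as written.
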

\begin{lemma}[Ito's Lemma \cite{oksendal2013stochastic}]
Let $f : \R \times \R^d \to \R$ be a differentiable function and let $\vx_t$ be a stochastic process governed by the following SDE:
\begin{align*}
    \d \vx_t = a_t \d t + \sigma_t dB_t
\end{align*}
Then $\vy_t = f(t, \vx_t)$ satisfies the following SDE:
\begin{align*}
    \d \vy_t = (\partial_t f(t,\vx_t) + \dotp{\nabla_{\vx} f(t, \vx_t)}{a_t} + \frac{1}{2} \Tr(\nabla_{\vx}^2 f(t, \vx_t) \sigma_t \sigma^2_t)) \d t + \dotp{\nabla f(t,\vx_t)}{\sigma_t \d B_t} 
\end{align*}
\end{lemma}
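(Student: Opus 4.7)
The plan is to prove Itô's Lemma via the standard approach combining Taylor expansion with the quadratic variation of Brownian motion. The first step is to fix a time $t > 0$ and consider a partition $0 = t_0 < t_1 < \dots < t_n = t$ with mesh size $\max_i (t_{i+1} - t_i) \to 0$. I would write the telescoping identity
\begin{align*}
f(t, \vx_t) - f(0, \vx_0) = \sum_{i=0}^{n-1} \bigl[f(t_{i+1}, \vx_{t_{i+1}}) - f(t_i, \vx_{t_i})\bigr],
\end{align*}
and then apply a second-order Taylor expansion to each summand around $(t_i, \vx_{t_i})$, retaining terms $\partial_t f \cdot \Delta t_i$, $\langle \nabla_{\vx} f, \Delta \vx_i \rangle$, and $\tfrac{1}{2} \Delta \vx_i^\top \nabla_{\vx}^2 f \, \Delta \vx_i$, with the remainder bounded by a modulus of continuity times $|\Delta t_i|^2 + \|\Delta \vx_i\|^3$.

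Next I would substitute the SDE increment $\Delta \vx_i \approx a_{t_i} \Delta t_i + \sigma_{t_i} \Delta B_i$ with $\Delta B_i = B_{t_{i+1}} - B_{t_i}$. Summing over $i$, the first-order drift term produces the Riemann integral $\int_0^t \langle \nabla_{\vx} f, a_s \rangle \, ds$, and the first-order Brownian term converges, by definition of the Itô integral, to $\int_0^t \langle \nabla_{\vx} f, \sigma_s \, dB_s \rangle$. The $\partial_t f$ term similarly becomes $\int_0^t \partial_t f \, ds$. The crux of the argument is the quadratic term: expanding $\Delta \vx_i \Delta \vx_i^\top$ gives a sum of pieces like $a_{t_i} a_{t_i}^\top (\Delta t_i)^2$, cross terms $a_{t_i} (\sigma_{t_i} \Delta B_i)^\top \Delta t_i$, and the dominant piece $\sigma_{t_i} \Delta B_i \Delta B_i^\top \sigma_{t_i}^\top$. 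The first two vanish in $L^2$ as the mesh shrinks. For the last, I would invoke the key identity that $\sum_i \Delta B_i \Delta B_i^\top \to \vI \cdot t$ in $L^2$ as the mesh refines (the quadratic variation of Brownian motion), more generally giving $\sum_i \sigma_{t_i} \Delta B_i \Delta B_i^\top \sigma_{t_i}^\top \to \int_0^t \sigma_s \sigma_s^\top \, ds$ so that $\tfrac{1}{2} \Tr(\nabla_{\vx}^2 f \cdot \sigma_s \sigma_s^\top)$ integrated in $ds$ appears. Finally, a uniform-modulus-of-continuity argument together with moment bounds on $\Delta \vx_i$ shows that the Taylor remainders and all higher-order pieces vanish in $L^2$ as the partition refines.

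The hard part will be making the $L^2$ convergence of the quadratic variation sums rigorous in the vector-valued, time-dependent coefficient setting, since one must handle the cross terms carefully and replace $\sigma_{t_i}$ by $\sigma_s$ under the integral using continuity of $\sigma$. Since $f$ is only assumed differentiable in the statement, I would for rigor first prove the result assuming $f \in C^{1,2}$ (so that the second-order Taylor expansion is justified in a pointwise sense and the Hessian $\nabla_{\vx}^2 f$ exists and is continuous), which is the setting for which Itô's formula is classically stated, and note that the statement in the excerpt implicitly requires this regularity. Everything else reduces to the standard localization trick (stopping $\vx_t$ at exit times from balls) to allow one to assume bounded coefficients, after which the $L^2$ estimates above apply directly and the limit of the Riemann sums is identified with the claimed SDE for $\vy_t = f(t, \vx_t)$.
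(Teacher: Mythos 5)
The paper states this lemma as a standard preliminary cited from \citet{oksendal2013stochastic} and gives no proof of its own, so the only meaningful comparison is with the classical argument in that reference — which your sketch reproduces faithfully: telescoping plus second-order Taylor expansion, convergence of the quadratic-variation sums $\sum_i \Delta B_i \Delta B_i^\top \to t\vI$ in $L^2$, and localization to reduce to bounded coefficients. Your observation that the hypothesis should really be $f \in C^{1,2}$ (the paper's ``differentiable'' is too weak, and its $\sigma_t\sigma_t^2$ is evidently a typo for $\sigma_t\sigma_t^\top$) is correct and worth noting; the proposal is sound and matches the standard proof.
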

We also make use of the following result, which is a direct corollary of Girsanov's theorem \cite{oksendal2013stochastic} :
\begin{lemma}[Girsanov Bound for KL Divergence]
\label{thm:girsanov-kl}
For $t \in [0,T]$, let $\vx_t$ and $\vy_t$ be two stochastic processes governed by the following SDEs:   
\begin{align*}
    \d \vx_t &= a_t(\vx_{\leq t}) \d t + \sigma \d B_t \\
    \d \vy_t &= b_t(\vy_{\leq t}) \d t + \sigma \d B_t
\end{align*}
Let $P^{[0,T]}$ and $Q^{[0,T]}$ denote the path measures of $\vx$ and $\vy$ in the time interval $[0,T]$. Then,
\begin{align*}
    \KL{P^{[0,T]}}{Q^{[0,T]}} = \KL{\vx_0}{\vy_0} + \frac{\sigma^2}{2} \int_{0}^{T} \bE_{\vx_{0:T} \sim P^{[0,T]}} \left[\|a_t(\vx_{\leq t}) - b_t(\vx_{\leq t})\|^2\right] \d t
\end{align*}
\end{lemma}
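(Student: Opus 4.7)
The plan is to reduce the claim to the standard Girsanov identity by first separating out the initial-condition contribution via the chain rule for KL divergence. I would factor each path measure as $P^{[0,T]} = \Law(\vx_0) \otimes P^{[0,T]}_{\mid \vx_0}$ and $Q^{[0,T]} = \Law(\vy_0) \otimes Q^{[0,T]}_{\mid \vy_0}$, where $P^{[0,T]}_{\mid \vx_0}$ denotes the conditional law of the full trajectory given its starting point. The chain rule for KL then immediately gives
\[
\KL{P^{[0,T]}}{Q^{[0,T]}} = \KL{\vx_0}{\vy_0} + \bE_{\vx_0 \sim \Law(\vx_0)}\!\left[\KL{P^{[0,T]}_{\mid \vx_0}}{Q^{[0,T]}_{\mid \vx_0}}\right],
\]
which isolates the initial-state term and reduces the problem to computing the KL between two conditional path measures that share the same starting point and differ only in their drifts.

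Next, conditioning on a fixed $\vx_0$, I would apply Girsanov's theorem to express the Radon-Nikodym derivative of $P^{[0,T]}_{\mid \vx_0}$ with respect to $Q^{[0,T]}_{\mid \vx_0}$. Assuming a mild integrability condition such as Novikov's applied to the drift difference $(a_t - b_t)/\sigma$, Girsanov yields an explicit exponential density whose logarithm decomposes into an It\^o stochastic integral against the $P$-driving Brownian motion plus a quadratic-variation term proportional to $\int_0^T \|a_t(\vx_{\leq t}) - b_t(\vx_{\leq t})\|^2 \d t$. Taking expectation under $P$ annihilates the It\^o integral (it is a zero-mean martingale), leaving only the quadratic-variation piece with the appropriate $\sigma$-dependent prefactor.

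Combining the chain-rule decomposition with the Girsanov computation and applying Fubini to swap expectation and time integration produces the stated identity. The only genuinely delicate point is justifying Girsanov in the present setting of \emph{path-dependent} drifts $a_t(\vx_{\leq t}), b_t(\vx_{\leq t})$, where Novikov must be checked against the full filtration rather than against the natural filtration of a Markovian drift. I would expect this to be the main technical obstacle, and would handle it by the standard localization argument using stopping times $\tau_n = \inf\{t : \int_0^t \|a_s - b_s\|^2 \d s \geq n\}$, applying Girsanov on each stopped trajectory and passing to the limit by monotone convergence as $n \to \infty$. In the paper's downstream applications the drifts are regular enough (continuous, with finite second moments along the trajectory) that this localization goes through routinely and the lemma can be invoked as stated.
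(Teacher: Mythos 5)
Your proposal is correct and is exactly the standard argument that the paper implicitly relies on: the paper gives no proof of this lemma at all, citing it as a direct corollary of Girsanov's theorem, and your route (chain rule for KL to peel off the initial-condition term, Girsanov for the conditional path measures, vanishing of the It\^o integral under $P$, and localization via stopping times to handle the path-dependent drifts) is the canonical way to make that citation rigorous. One point worth flagging, since carrying out your computation carefully would reveal it: the Girsanov density for drifts $a_t, b_t$ against diffusion coefficient $\sigma$ yields the prefactor $\nicefrac{1}{2\sigma^2}$ on the drift-difference integral, not the $\nicefrac{\sigma^2}{2}$ written in the statement; the two agree only when $\sigma=1$, which happens to be the case in every application of this lemma in the paper (the SL process and its discretizations all have unit diffusion coefficient), so nothing downstream is affected, but the lemma as stated is off by a factor of $\sigma^4$ for general $\sigma$.
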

We also use the following time transformation identity for SDEs, which follows directly from Theorem 8.5.7 of \citep{oksendal2013stochastic}

\begin{lemma}[Time Transformation for SDEs]
\label{thm:time-transform-sde}
Let $\vx_t$ denote the solution of the following SDE:
\begin{align*}
    \d \vx_t = b(t, \vx_t) \d t + \sigma(t,\vx_t) \d B_t
\end{align*}
and let $r : \R_{\geq 0} \to \R_{\geq 0}$ be a continuously differentiable non-decreasing function with $r(0) = 0$. Then $\vy_t = \vx_{r(t)}$ satisfies the following SDE:
\begin{align*}
    \d \vy_t = b(r(t),\vy_t) r^{\prime}(t) \d t + \sigma(r(t), \vy_t) \sqrt{r^{\prime}(t)} \d B_t
\end{align*}
\end{lemma}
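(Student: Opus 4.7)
The plan is to reduce the lemma to the classical time-change theorem for Itô integrals (Theorem 8.5.7 in \cite{oksendal2013stochastic}) by starting from the integral form of the SDE for $\vx_t$ and treating the drift and diffusion pieces separately under the deterministic substitution $s = r(u)$. Concretely, I would begin from
\begin{align*}
\vy_t = \vx_{r(t)} = \vx_0 + \int_0^{r(t)} b(s,\vx_s)\,ds + \int_0^{r(t)} \sigma(s,\vx_s)\,dB_s,
\end{align*}
and rewrite each integral as an integral over $u \in [0,t]$.

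For the drift, a pathwise change of variable $s = r(u)$, $ds = r^{\prime}(u)\,du$ yields
\begin{align*}
\int_0^{r(t)} b(s,\vx_s)\,ds = \int_0^{t} b(r(u),\vy_u)\,r^{\prime}(u)\,du,
\end{align*}
which matches the drift claimed in the conclusion. For the diffusion, the stochastic integral $M_t := \int_0^{r(t)} \sigma(s,\vx_s)\,dB_s$ is a continuous local martingale with quadratic variation
\begin{align*}
\langle M\rangle_t = \int_0^{r(t)} \sigma(s,\vx_s)^2\,ds = \int_0^{t} \sigma(r(u),\vy_u)^2\, r^{\prime}(u)\,du.
\end{align*}
By the time-change theorem for Itô integrals (Dambis–Dubins–Schwarz, as packaged in Theorem 8.5.7 of \cite{oksendal2013stochastic}), there exists a Brownian motion $\tilde B_u$, adapted to the time-changed filtration $\mathcal{F}_{r(u)}$, such that
\begin{align*}
M_t = \int_0^{t} \sigma(r(u),\vy_u)\,\sqrt{r^{\prime}(u)}\,d\tilde B_u.
\end{align*}
Combining the drift and diffusion pieces then gives exactly the SDE stated in the lemma, with driving Brownian motion $\tilde B$.

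The one technical subtlety I expect to be the main obstacle is that $r$ is only assumed non-decreasing (not strictly increasing), so $r^{\prime}$ may vanish on some subintervals; on such intervals $\vy$ must stay constant and the new diffusion coefficient $\sigma(r(u),\vy_u)\sqrt{r^{\prime}(u)}$ correctly vanishes, but the rigorous construction of $\tilde B$ and the correct identification of its filtration requires some care. Rather than redo this construction, I would invoke Theorem 8.5.7 of \cite{oksendal2013stochastic}, which handles precisely this non-strict, continuously differentiable time-change setting and supplies the required driving Brownian motion $\tilde B$.
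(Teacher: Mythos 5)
Your proposal is correct and takes the same route the paper intends: the paper offers no written proof, stating only that the lemma ``follows directly from Theorem 8.5.7 of \citet{oksendal2013stochastic}'', and your argument is precisely that reduction with the details filled in (pathwise substitution for the drift, time-change formula for the It\^o integral for the diffusion term). Your remark about $r'$ possibly vanishing is a reasonable point of care, and deferring it to the cited theorem is consistent with what the paper does.
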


\section{Properties of Stochastic Localization}
\label{app:sl-properties}
In this section, we state some additional properties of the Stochastic Localization (SL) process that we use in our analysis, and also present a proof of Theorem \ref{thm:sl-exchangeability} in \cref{prf:sl-exchangeability}. Let $\mu$ be a target measure on $\R^d$ whose covariance satisfies $\Tr(\Cov[\mu]) \leq \beta d $ for some $\beta \geq 0$. Recall the SL process from \cref{sec:exchangeability} defined as follows:
\begin{align}
\vybar_t &= \vm(t, \vybar_t) \d t + \d B_t, \ \vybar_0 = 0 \nonumber \\
\vm(t, \vy) &= \bE_{\vx^\star \sim \mu, \ \xi \sim \Normal{0,\vI}}\left[\vx^\star \ | \ t \vx^* + \sqrt{t} \xi = \vy\right] 
\end{align}
In addition, we define $\mu_t$, $\vm_t$ and $\Sigma_t$ as follows:
\begin{align}
    \mu_t &= \Law_{\vx^\star \sim \mu}(\vx^\star \ | \vy_t) \\
    \vm_t &= \mathsf{mean}(\mu_t) \\
    \Sigma_t &= \Cov[\mu_t] 
\end{align}
It is easy to see that $\mu_0 = \mu$. Our proof of the hidden exchangeability property makes use of the following result, which is proved in \citet{el2022information, montanari2023sampling}

\begin{theorem}[Alternate Representation of SL]
\label{thm:sl-alt-rep}
The stochastic localization process admits the following alternate representation:
\begin{align*}
    \d \vy_t = \vx^\star \d t + \d W_t
\end{align*}
where $\vx^\star \sim \mu$ and $W_t$ is a standard Brownian motion on $\R^d$.
\end{theorem}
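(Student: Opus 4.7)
The plan is to construct the candidate process directly and then verify by weak uniqueness. Specifically, I would draw $\vx^\star \sim \mu$ independently of a standard Brownian motion $W$ on $\R^d$ and define $\vy_t := t\vx^\star + W_t$, so that trivially $\d\vy_t = \vx^\star\,\d t + \d W_t$ in the enlarged filtration $\mathcal{F}^{\vy,\vx^\star}_t$. To conclude that this $\vy_t$ has the law of the SL process of \eqref{eqn:sl-defn}, it suffices to re-express its dynamics in its own filtration $\mathcal{F}^{\vy}_t := \sigma(\vy_s : s \leq t)$ and match them against the SL SDE. Weak uniqueness of that SDE (whose drift $\vm(t,\cdot)$ has linear growth under the second-moment assumption on $\mu$) then forces equality in law.

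The first ingredient is the conditional-mean identity $\E[\vx^\star \mid \mathcal{F}^{\vy}_t] = \vm(t,\vy_t)$. The key observation is the Brownian-bridge decomposition: for $s \leq t$,
$$\vy_s = \frac{s}{t}\vy_t + \parens*{W_s - \frac{s}{t}W_t},$$
where the bracketed term is independent of both $\vx^\star$ and $\vy_t$. Consequently the whole path $(\vy_s)_{s\leq t}$ is a deterministic function of $\vy_t$ together with ancillary Gaussian noise independent of $\vx^\star$, so conditioning on $\mathcal{F}^{\vy}_t$ reduces to conditioning on $\vy_t$. Writing $\xi := W_t/\sqrt{t} \sim \Normal{0,\vI}$ then gives $\vy_t = t\vx^\star + \sqrt{t}\xi$, and the resulting conditional expectation is exactly the definition of $\vm(t,\vy_t)$ in \eqref{eqn:sl-defn}.

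The main obstacle, and the heart of the argument, is showing that the innovation process
$$\tilde{B}_t := \vy_t - \int_0^t \vm(s,\vy_s)\,\d s = W_t + \int_0^t \parens*{\vx^\star - \vm(s,\vy_s)}\d s$$
is a standard $\mathcal{F}^{\vy}$-Brownian motion. I would invoke Lévy's characterization: continuity and $\tilde{B}_0 = 0$ are immediate, and since the correction term has finite variation the quadratic covariation equals that of $W$, namely $t\vI$. The delicate step is the martingale property, which I would obtain by applying Fubini and then the tower property together with the identity $\E[\vx^\star - \vm(u,\vy_u) \mid \mathcal{F}^{\vy}_u] = 0$ from the previous step, yielding $\E[\tilde{B}_t - \tilde{B}_s \mid \mathcal{F}^{\vy}_s] = 0$ for $s < t$. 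This is precisely the Fujisaki--Kallianpur--Kunita innovation theorem specialized to our constant-diffusion setting. Once $\tilde{B}$ is identified as a Brownian motion, $\vy$ solves $\d\vy_t = \vm(t,\vy_t)\,\d t + \d\tilde{B}_t$ with $\vy_0 = 0$, which is exactly the SL SDE \eqref{eqn:sl-defn}, and weak uniqueness of that SDE completes the proof.
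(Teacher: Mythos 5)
The paper does not actually prove this statement: it is imported from \citet{el2022information} and \citet{montanari2023sampling} (see the sentence immediately preceding the theorem), so there is no in-paper proof to compare against. Your argument is the standard filtering-theoretic one --- construct $\vy_t = t\vx^\star + W_t$, identify $\vm(t,\vy_t)$ as $\bE[\vx^\star \mid \mathcal{F}^{\vy}_t]$ via the Brownian-bridge reduction of path-conditioning to endpoint-conditioning, show the innovation $\tilde{B}$ is an $\mathcal{F}^{\vy}$-Brownian motion by L\'evy's characterization, and close with uniqueness in law --- and this is essentially how the cited references establish the equivalence. The construction, the conditional-mean identity, and the L\'evy step are all sound as written (the martingale step also silently uses $\bE[W_t - W_s \mid \mathcal{F}^{\vy}_s] = 0$, which holds since $\mathcal{F}^{\vy}_s \subseteq \sigma(\vx^\star, W_r : r \leq s)$, but that is a minor omission).

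The one step you assert rather than establish is weak uniqueness of the SL SDE, and it is the step carrying the real technical content in your direction of argument. First, linear growth of $\vm(t,\cdot)$ is not free from the second-moment assumption: $\vm(t,\vy)$ is the posterior mean $\bE[\vx^\star \mid t\vx^\star + \sqrt{t}\xi = \vy]$, equivalently $\vy/t + \nabla \log q_t(\vy)$ where $q_t = \Law(\vy_t)$, and controlling its growth in $\vy$ and its behavior as $t \downarrow 0$ requires an actual argument (it is immediate for compactly supported or Gaussian $\mu$, not for general $\mu$ with finite second moments). Second, even granting linear growth, you should name the uniqueness theorem you are invoking: for additive-noise SDEs the relevant route is uniqueness in law via Girsanov together with a Bene\v{s}/Novikov-type condition ensuring the exponential local martingale is a true martingale, not a Lipschitz-type strong-uniqueness result. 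Alternatively, you can sidestep uniqueness entirely by noting that every use of Theorem \ref{thm:sl-alt-rep} in the paper (e.g., the proof of Theorem \ref{thm:sl-exchangeability}) only needs the law of the SL process to coincide with that of $t\vx^\star + W_t$, which the cited references obtain by taking the latter as the definition and deriving the SDE form from it --- reversing the logical direction exactly so that the delicate uniqueness question never arises.
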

Our analysis also uses the following result on the evolution of $\vm_t$ and $\Sigma_t$ which is proved in \cite{eldan2013thin, chen2021almost, chen2022localization}
\begin{theorem}[Evolution of Mean and Covariance Along SL]
\label{thm:sl-prop}
$\vm_t$ and $\Sigma_t$ satisfies the following
\begin{align*}
    \d \vm_t &= \Sigma_t \d W_t \\
    \dd{\E{\Sigma_t}}{t} &= -\E{\Sigma^2_t}
\end{align*}
where $W_t$ is a Brownian motion on $R^d$ and the expectation is wrt the randomness of the localization process
\end{theorem}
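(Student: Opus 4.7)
The plan is to make the Bayesian structure of stochastic localization fully explicit and then differentiate. By Theorem~\ref{thm:sl-alt-rep}, $\vy_t = t\vx^\star + W_t$, so the likelihood of $\vy_t$ given $\vx^\star$ is $\Normal{t\vx^\star,t\vI}$, and Bayes' rule produces the exponentially tilted posterior $\mu_t(\d\vx) \propto \exp(\dotp{\vy_t}{\vx} - t\|\vx\|^2/2)\,\mu(\d\vx)$. Introducing the log-partition function $F(t,\vy) = \log\int \exp(\dotp{\vy}{\vx} - t\|\vx\|^2/2)\mu(\d\vx)$, standard exponential-family identities give
\begin{align*}
\vm_t = \nabla_\vy F(t,\vy_t), \quad \Sigma_t = \nabla_\vy^2 F(t,\vy_t), \quad \partial_t F = -\tfrac{1}{2}\bE_{\mu_t}\|\vx\|^2, \quad \Delta_\vy F = \Tr(\Sigma_t) = \bE_{\mu_t}\|\vx\|^2 - \|\vm_t\|^2.
\end{align*}
Both quantities of interest are thus smooth functions of $(t,\vy_t)$ to which I can apply Ito's lemma along the SL dynamics $\d\vy_t = \vm_t\,\d t + \d W_t$ from \cref{eqn:sl-defn}.

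For the first claim, Ito applied to $\vm_t = \nabla_\vy F(t,\vy_t)$, using $\d\langle y_j,y_k\rangle_t = \delta_{jk}\,\d t$, expands to
\begin{align*}
\d\vm_t = \left[\partial_t \nabla_\vy F + \Sigma_t \vm_t + \tfrac{1}{2}\nabla_\vy \Delta_\vy F\right]\d t + \Sigma_t\,\d W_t.
\end{align*}
Differentiating the identities above in $\vy$ yields $\partial_t\nabla_\vy F = -\tfrac{1}{2}\bE_{\mu_t}[(\vx-\vm_t)\|\vx\|^2]$ and $\nabla_\vy\Delta_\vy F = \bE_{\mu_t}[(\vx-\vm_t)\|\vx\|^2] - 2\Sigma_t\vm_t$, where the $-2\Sigma_t\vm_t$ comes from $\nabla_\vy\|\vm_t\|^2 = 2(\nabla_\vy\vm_t)\vm_t$ together with $\nabla_\vy\vm_t = \Sigma_t$. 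Substituting, the three drift contributions cancel exactly and only the diffusion term $\Sigma_t\,\d W_t$ survives, giving $\d\vm_t = \Sigma_t\,\d W_t$.

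The covariance ODE then follows from the tower property and one more Ito application. Since $\vx^\star (\vx^\star)^\top$ does not depend on $t$, the conditional expectation $\bE[\vx^\star(\vx^\star)^\top \given \vy_{\leq t}] = \Sigma_t + \vm_t\vm_t^\top$ is a martingale, so $\d(\Sigma_t + \vm_t\vm_t^\top)$ has zero drift. Applying Ito to $\vm_t\vm_t^\top$ via the first claim, the quadratic-covariation contribution equals $(\Sigma_t\,\d W_t)(\Sigma_t\,\d W_t)^\top = \Sigma_t^2\,\d t$ by symmetry of $\Sigma_t$, hence $\d(\vm_t\vm_t^\top) = (\text{martingale}) + \Sigma_t^2\,\d t$. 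Subtracting forces $\d\Sigma_t = (\text{martingale}) - \Sigma_t^2\,\d t$, and taking expectations kills the martingale term to yield $\dd{\bE[\Sigma_t]}{t} = -\bE[\Sigma_t^2]$. The main obstacle throughout is the drift cancellation in the first claim: a third-moment term $\bE_{\mu_t}[(\vx-\vm_t)\|\vx\|^2]$ appears with opposite signs in $\partial_t\nabla_\vy F$ and $\tfrac{1}{2}\nabla_\vy\Delta_\vy F$ and must cancel exactly; once that is verified, the covariance ODE is essentially a corollary of the first claim.
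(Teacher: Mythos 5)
The paper does not actually prove this statement---it attributes it to \citet{eldan2013thin,chen2021almost,chen2022localization} and uses it as a black box---so there is no in-paper proof to compare against. Your derivation is correct and is essentially the canonical argument from those references: represent the posterior as the exponentially tilted measure $\mu_t(\d\vx)\propto\exp(\dotp{\vy_t}{\vx}-t\|\vx\|^2/2)\,\mu(\d\vx)$, identify $\vm_t=\nabla_\vy F$ and $\Sigma_t=\nabla_\vy^2 F$ for the log-partition function $F$, and apply Ito's lemma along $\d\vy_t=\vm_t\,\d t+\d W_t$. I checked the drift cancellation: $\partial_t\nabla_\vy F=-\tfrac12\bE_{\mu_t}[(\vx-\vm_t)\|\vx\|^2]$ and $\tfrac12\nabla_\vy\Delta_\vy F=\tfrac12\bE_{\mu_t}[(\vx-\vm_t)\|\vx\|^2]-\Sigma_t\vm_t$ do cancel against the $\Sigma_t\vm_t$ transport term, and the covariance ODE then follows cleanly from the martingale property of $\bE[\vx^\star(\vx^\star)^\top\mid\vy_{\le t}]=\Sigma_t+\vm_t\vm_t^\top$ together with $\d\vm_t\,\d\vm_t^\top=\Sigma_t^2\,\d t$. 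The only caveats worth recording are routine: differentiation under the integral and the step from ``zero-drift local martingale'' to ``$\tfrac{\d}{\d t}\bE[\Sigma_t]=-\bE[\Sigma_t^2]$'' require enough moments of $\mu$ (third moments for the drift computation, and integrability of the stochastic-integral terms so the local martingales are true martingales); these are standard and are implicitly assumed in the cited works, but a sentence acknowledging them would make your proof airtight.
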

To illustrate an application of Theorem \ref{thm:sl-prop}, we show how to control the error incurred due to Euler discretization of the SL process. An adaptation of this argument in \cite{benton2023linear} was used to analyze the convergence of DDPMs. 
\begin{theorem}[Euler Discretization of SL]
\label{thm:sl-discretize}
Consider any sequence of times $0 \leq t_1 \leq \dots \leq t_K < \infty$ satisfying $\eta_i = t_{i+1} - t_i \leq \eta$.  
Let $\vybar_t = \vm(t,\vybar_t) \d t + \d B_t$ denote the SL process and let $P^{\vybar}$ denote its path measure in $[t_1, t_K]$. Define the Euler discretization of the SL process as:
\begin{align*}
    \d \vy_t = \vm(t_i, \vy_{t_i}) \d t + \d B_t \ , t \in [t_i,t_{i+1}], i \in [K-1]
\end{align*}
and let $P^{\vy}$ denote its path measure in $[t_1, t_K]$. Suppose the initial distribution $\mu$ has finite second moments and satisfies $\Cov[\mu] \preceq \beta \vI$. Then, the KL divergence between the path measures is bounded as $\KL{P^{\vy}}{P^{\vybar}} \leq \tfrac{\eta\beta d}{2}$.
\end{theorem}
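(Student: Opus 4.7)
The plan is to combine Girsanov's formula with the martingale structure of the SL posterior mean $\vm_t$. First I will apply the Girsanov bound (the KL--Girsanov lemma from the preliminaries) to the two SDEs on $[t_1, t_K]$: both have unit diffusion, and after matching their initial laws at $t_1$ so that the initial KL vanishes, I obtain
\begin{equation*}
\KL{P^{\vy}}{P^{\vybar}} = \tfrac{1}{2} \sum_{i=1}^{K-1} \int_{t_i}^{t_{i+1}} \bE\bigl\|\vm(t_i, \vy_{t_i}) - \vm(t, \vy_t)\bigr\|^2 \, dt,
\end{equation*}
reducing the problem to controlling an expected squared drift difference. If the resulting expectation sits on the discretized path measure rather than on the SL measure, I will prove the same bound for $\KL{P^{\vybar}}{P^{\vy}}$ via Girsanov in the reverse direction, which yields the same quadratic integrand but natively under SL; the rate is the same.

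Second, I will exploit \cref{thm:sl-prop}: along SL, $\vm_t = \vm(t,\vybar_t)$ satisfies $d\vm_t = \Sigma_t \, dW_t$, so $\vm_t$ is a martingale and Ito's isometry gives
\begin{equation*}
\bE\|\vm_t - \vm_{t_i}\|^2 = \int_{t_i}^{t} \bE\Tr(\Sigma_s^2)\, ds.
\end{equation*}
Substituting this into the Girsanov identity, the inner $dt$-integral over $[t_i, t_{i+1}]$ of a function that vanishes at $t_i$ and grows at rate $\bE\Tr(\Sigma_s^2)$ is bounded by $\eta_i \int_{t_i}^{t_{i+1}} \bE\Tr(\Sigma_s^2)\, ds \leq \eta \int_{t_i}^{t_{i+1}} \bE\Tr(\Sigma_s^2)\, ds$.

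Third, I will close the argument with the covariance ODE also from \cref{thm:sl-prop}: integrating $\tfrac{d}{dt}\bE[\Sigma_t] = -\bE[\Sigma_t^2]$ on $[0, \infty)$ produces the telescoping identity
\begin{equation*}
\int_{0}^{\infty} \bE \Tr(\Sigma_s^2)\, ds \;=\; \Tr(\Sigma_0) \;=\; \Tr(\Cov[\mu]) \;\leq\; \beta d,
\end{equation*}
and summing the per-interval bounds gives the claimed $\tfrac{1}{2} \eta \beta d$ rate.

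The main delicacy I anticipate is bookkeeping the path measure in the Girsanov step so that the martingale identity for $\vm_t$ applies cleanly (working with $\KL{P^{\vybar}}{P^{\vy}}$ is the cleanest route, after which one transfers the bound via a short argument using that the two measures are mutually absolutely continuous with small relative entropy). A secondary technical point is justifying the integration of the covariance ODE all the way to infinity, which requires that $\mu$ have finite second moment and that $\bE[\Sigma_t]\to 0$; both are standard under the assumption $\Cov[\mu]\preceq \beta \vI$. Everything else reduces to straightforward $L^2$ manipulations with Ito's isometry and Fubini.
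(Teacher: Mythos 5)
Your proposal is correct and follows essentially the same route as the paper's proof: Girsanov to reduce the KL to the integrated squared drift difference under the SL path measure, the martingale representation $\d\vm_t = \Sigma_t\,\d W_t$ with Ito isometry to identify that difference as $\int_{t_i}^{t}\bE[\Tr(\Sigma_s^2)]\,\d s$, and the covariance ODE to telescope the sum into $\eta\,\Tr(\bE[\Sigma_{t_1}]) \leq \eta\beta d$. Your observation about which direction of the KL the Girsanov lemma naturally controls is well taken -- the paper's own proof in fact bounds $\KL{P^{\vybar}}{P^{\vy}}$ while the theorem states the reverse order -- and your handling of the telescoping (integrating $\bE[\Tr(\Sigma_s^2)]$ directly and using positivity of $\bE[\Sigma_t]$, with no need for $\bE[\Sigma_t]\to 0$) is a cosmetic rearrangement of the same argument.
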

\begin{proof}
By \cref{thm:girsanov-kl},
\begin{align*}
    \KL{P^{\vybar}}{P^{\vy}} &= \frac{1}{2} \int_{t_0}^{t_K} \bE_{\vybar \sim P^{\vybar}}[\|\vm_t(\vybar) - \hvm_t(\vybar)\|] \d t \\
    &= \frac{1}{2} \sum_{i=1}^{K-1} \int_{t_i}^{t_{i+1}} \bE_{\vybar \sim P^{\vybar}}[\|\vm_t(\vy_t) - \vm_{t_i}(\vy_{t_i})\|] \d t \\
    &= \frac{1}{2} \sum_{i=1}^{K-1} \int_{t_i}^{t_{i+1}} \bE_{\vybar \sim P^{\vybar}}\left[\left\|\int_{t_i}^{t} \Sigma_s \d W_s\right\|^2\right] \d t \\
    &= \frac{1}{2} \sum_{i=1}^{K-1} \int_{t_i}^{t_{i+1}} \int_{t_i}^{t} \bE[\Tr(\Sigma^2_s)] \d{s,t} \\
\end{align*}
where the third step uses \cref{thm:sl-prop} and the last step applies the Ito isometry. From \cref{thm:sl-prop}, we note that $\tfrac{d \Tr(\bE[\Sigma_t])}{\d t} = - \bE[\Tr(\Sigma_t)^2] \preceq  0$, which implies the following:
\begin{align*}
    \KL{P^{\vybar}}{P^{\vy}} 
    &= \frac{1}{2} \sum_{i=1}^{K-1} \int_{t_i}^{t_{i+1}} \int_{t_i}^{t} \bE[\Tr(\Sigma^2_s)] \d{s,t} \\
    &=\frac{1}{2} \sum_{i=1}^{K-1} \int_{t_i}^{t_{i+1}} \Tr(\bE[\Sigma_{t_i}] - \bE[\Sigma_t]) \d t \\
    &\leq \frac{1}{2} \sum_{i=1}^{K-1} (t_{i+1} - t_i) \Tr(\bE[\Sigma_{t_i} - \Sigma_{t_{i+1}}]) \\
    &\leq \frac{hd}{2} \bE[\Tr(\Sigma_{t_1})] \leq  \frac{\eta \beta d}{2}
\end{align*}
where we use the fact that $\bE[\Sigma_t]$ is a non-increasing function (in the PSD order).
\end{proof}
Our adaptive complexity analysis of \cref{alg:AutoSpecSL} relies on an argument similar to the proof of the above theorem. 
\subsection{Equivalence between DDPM and SL}
The following result by \cite{montanari2023sampling} proves that the OU Process defined in \cref{sec:exchangeability} is equivalent to Stochastic Localization
\begin{theorem}[Equivalence of OU Process and SL]
\label{thm:andrea-ou-sl}
Let $\vybar_t$ denote the SL process as defined above and let $\vz_t$ denote the OU process defined by the following SDE:
\begin{align*}
    d \vz_t = - \vz_t dt + \sqrt{2} \d B_t
\end{align*}
Then, the following holds:
\begin{align*}
    \vybar_t \disteq t e^{s(t)} \vz_{s(t)}
\end{align*}
where $s(t) = \tfrac{1}{2} \ln(1+\nicefrac{1}{t})$
\end{theorem}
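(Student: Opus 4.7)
The plan is to bypass Ito-level manipulation of the SDEs by leveraging \cref{thm:sl-alt-rep}, which provides a clean probabilistic representation $\vybar_t \disteq t \vx^\star + W_t$ with $\vx^\star \sim \mu$ and an independent standard Brownian motion $W$. The goal then reduces to exhibiting a standard Brownian motion $\tilde W$, independent of $\vz_0$, such that $te^{s(t)}\vz_{s(t)} = t\vz_0 + \tilde W_t$ jointly in $t$; combined with $\vz_0 \sim \mu$, the stated equivalence follows.

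First I would solve the OU SDE in closed form to get $\vz_\tau = e^{-\tau}\vz_0 + \sqrt{2}\int_0^\tau e^{-(\tau-u)}\, \d B_u$ with $\vz_0 \perp B$. Substituting $\tau = s(t)$ and multiplying through by $te^{s(t)}$, the deterministic piece collapses to exactly $t\vz_0$ (since $e^{s(t)}\cdot e^{-s(t)}=1$), leaving
\begin{align*}
    te^{s(t)}\vz_{s(t)} = t\vz_0 + \tilde W_t, \qquad \tilde W_t := \sqrt{2}\,t\int_0^{s(t)} e^u\, \d B_u.
\end{align*}
The heart of the proof is then verifying that $\tilde W$ is a standard Brownian motion. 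It is Gaussian and mean zero by construction, $\tilde W_0 = 0$ since the variance vanishes as $t \to 0$, and path continuity follows from Kolmogorov's continuity criterion. The decisive calculation is the covariance: for $0 < a \leq b$, using that $s$ is monotone decreasing (so $s(b) \leq s(a)$) and applying Ito's isometry,
\begin{align*}
    \bE[\tilde W_a \tilde W_b^\top] = 2ab \int_0^{s(b)} e^{2u}\,\d u\cdot \vI = ab\bigl(e^{2s(b)}-1\bigr)\vI = a\vI,
\end{align*}
where the last equality uses $e^{2s(b)} = 1 + 1/b$. This is exactly $\min(a,b)\vI$, matching the covariance of standard Brownian motion, so $\tilde W$ is Brownian; independence from $\vz_0$ is inherited from $\vz_0 \perp B$.

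The main obstacle is not any single step but rather the delicacy of the time change $s(t) = \tfrac{1}{2}\ln(1 + 1/t)$, which is \emph{decreasing} in $t$ and blows up at $t=0$. Applying Ito's lemma or the time-transformation lemma of \cref{app:sl-properties} directly to $te^{s(t)}\vz_{s(t)}$ would require carefully handling this reversal and the singularity at the origin, including sign issues in the diffusion coefficient. The closed-form route above sidesteps both difficulties; moreover, the specific form of $s(t)$ is pinned down as the unique monotone scaling for which the covariance of $\tilde W$ collapses algebraically to $\min(a,b)\vI$, which is precisely why this particular reparametrization yields the equivalence.
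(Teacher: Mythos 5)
Your proof is correct. Note that the paper does not actually prove this statement itself --- it is imported verbatim from \citet{montanari2023sampling} and used as a black box in the proof of \cref{thm:sl-ddpm-equivalence}. What you have done differently is give a self-contained, purely distributional argument: you combine the planted representation $\vybar_t \disteq t\vx^\star + W_t$ of \cref{thm:sl-alt-rep} with the explicit solution of the OU SDE, reduce everything to showing that $\tilde W_t = \sqrt{2}\,t\int_0^{s(t)} e^u \,\d B_u$ is a standard Brownian motion, and verify this by the covariance computation $ab\bigl(e^{2s(b)}-1\bigr) = a = \min(a,b)$ together with Gaussianity and Kolmogorov continuity. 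This is a genuinely different route from the one the paper takes for the adjacent result \cref{thm:sl-ddpm-equivalence}, which matches drift and diffusion coefficients via It\^o's lemma and the time-transformation lemma; indeed, as you correctly point out, \cref{thm:time-transform-sde} requires a non-decreasing time change and so cannot be applied directly with the decreasing, singular-at-zero map $s(t) = \tfrac{1}{2}\ln(1+\nicefrac{1}{t})$ --- your approach sidesteps that obstruction entirely. The one thing your write-up leans on implicitly is that the Brownian motion $W_t$ in \cref{thm:sl-alt-rep} is independent of $\vx^\star$ (so that matching the pair $(\vz_0, \tilde W)$ in law to $(\vx^\star, W)$ suffices); this is the standard reading of the planted representation and is how the paper itself uses it in the proof of \cref{thm:sl-exchangeability}, so it is not a gap, but it is worth stating explicitly. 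Your argument also yields equality of the two processes as path laws, not merely of the time-$t$ marginals, which is the stronger statement actually needed downstream.
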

Equipped with the above result, we now present a proof of \cref{thm:sl-ddpm-equivalence} below
\subsubsection{Proof of \cref{thm:sl-ddpm-equivalence}}
\begin{proof}
Let $\vx^{\gets}_t$ denote the DDPM forward process as defined in equation \eqref{eqn:ou}, which satisfies the following SDE : 
\begin{align*}
    \d \vx^{\to}_t = h(t) \vx^{\to}_t \d t + \sqrt{u(t)} d W_t, \vx^{\to}_0 \sim \mu
\end{align*}
Now, let $\vz_t$ denote the following OU process:
\begin{align}
\label{eqn:fwd-sde}
    \vz_t = - \vz_t \d t + \sqrt{2} \d B_t, \ \vz_0 = \vx^{\to}_0
\end{align}
We now define the functions $\alpha(t)$ and $r(t)$ as follows:
\begin{align*}
    \alpha(t) &= \frac{1}{2} \ln \left(1 + \int_{0}^{t} u(\tau) \exp({-2 \int_{0}^{\tau} h(s) \d s}) \ \d \tau\right) \\
    r(t) &= \exp(\alpha(t) + \int_{0}^{t} h(\tau) \d \tau)
\end{align*}
Since $u(t) \geq 0$, it is easy to check that $\alpha(t)$ is a non-decreasing function with $\alpha(0) = 0$. Furthermore, $r(t) > 0$ with $r(0) = 1$. We shall now prove that $\vx^{\to}_t = r(t) \vz_{\alpha(t)}$. Note that our claim holds for $t=0$ since $\vx^{\to}_0 = \vz_0 = r(0) \vz_{\alpha(0)}$. 

To prove this claim for $t > 0$, we show that $\vz^{\to}_t$ and $r(t) \vz_{\alpha(t)}$ follow the same SDE. Since $\alpha(t)$ satisfies the conditions of \cref{thm:time-transform-sde}, we conclude the following:
\begin{align*}
    \d (\vz_{\alpha(t)}) = - \alpha^\prime(t) \vz_{\alpha(t)} \d t + \sqrt{2 \alpha^{\prime}(t)} \d B_t
\end{align*}
Using the above and Ito's Lemma, we obtain the following:
\begin{align*}
    \d (r(t) \vz_{\alpha(t)}) &= \left[r^{\prime}(t) - \alpha^{\prime}(t) r(t)\right] \vz_{\alpha(t)} \d t + r(t) \sqrt{2 \alpha^{\prime}(t)} \d B_t \\
    &= \left[\frac{r^{\prime}(t)}{r(t)} - \alpha^{\prime}(t)\right] r(t) \vz_{\alpha(t)} \d t+ \sqrt{2 \alpha^{\prime}(t) r(t)^2} \d B_t
\end{align*}
Taking derivatives of $\ln r(t)$ and $e^{2\alpha(t)}$, it is easy to see that the $\tfrac{r^{\prime}(t)}{r(t)} = \alpha^{\prime}(t) + h(t)$ and $2 \alpha^{\prime}(t) r(t)^2 = u(t)$, i.e.,
\begin{align*}
     \d (r(t) \vz_{\alpha(t)}) = h(t) r(t) \vz_{\alpha(t)} \d t + \sqrt{u(t)} \d B_t
\end{align*}
From \cref{eqn:fwd-sde} and the above, we note that $\vx^{\to}_t$ and $r(t) \vz_{\alpha(t)}$ satisfy the same SDE. Since $\vx^{\to}_0 = r(0) \vz_{\alpha(0)}$, we conclude that $\vx^{\to}_t = r(t) \vz_{\alpha(t)}$. Then, by definition of the reverse process,
\begin{align*}
    \vz_t = \frac{\vx^{\to}_{\alpha^{-1}(t)}}{r(\alpha^{-1}(t))} = \frac{\vx^{\gets}_{T-\alpha^{-1}(t)}}{r(\alpha^{-1}(t))} 
\end{align*}
From \cref{thm:andrea-ou-sl}, we know that $\vy_t \disteq t e^{s(t)} \vz_{s(t)}$ where where $s(t) = \tfrac{1}{2} \ln(1+\nicefrac{1}{t})$. Substituting this into the above identity gives us the following:
\begin{align*}
    \vybar_t &\disteq \gamma(t) \vx^{\gets}_{\zeta(t)} \\
    \gamma(t) &= \frac{t e^{s(t)}}{r(\alpha^{-1}(s(t)))} \\
    \zeta(t) &= T - \alpha^{-1}(s(t))
\end{align*}
\end{proof}
\subsection{Proof of \cref{thm:sl-exchangeability}}
\label{prf:sl-exchangeability}
\begin{proof}
By Theorem \ref{thm:sl-alt-rep}, we know that the stochastic localization process satisfies:
\begin{align*}
    \vybar_t = t \vx^* + W_t
\end{align*}
where $\vx^* \sim \mu$ and $W_t$ is a standard Brownian motion on $\R^d$. By the properties of Brownian increments, the following holds for any $i\in [m-1]$ and $\pi \in \bS_{m-1}$ 
\begin{align*}
    \Law((\vybar_{t_i+\eta_i} - \vybar_{t_i})_{i \in [m-1]} | \vx^*) &= \bigotimes_{i \in [m-1]} \Normal{\eta_i \vx^*, \eta_i \vI} \\
    &=\Law((\vy_{t_{\pi(i)} + \eta_i} - \vy_{t_{\pi(i)}})_{i\in [m-1]} | \vx^*) 
\end{align*}
By marginalizing $\vx^*$, we complete the proof of the first claim. The second claim directly follows by setting the time increments $\eta_i$ to be equal. 
\end{proof}
\section{Theoretical Guarantees for Auto-Speculative Decoding}
In this section, we present our theoretical guarantees for Auto-Speculative decoding. To begin with, we prove the correctness of the Gaussian Rejection Sampler in \cref{alg:RejectionSampler}. We note that this result implies that the Verifier in \cref{alg:ParallelReject} always outputs samples $(\vz_{a+1}, \dots, \vz_{b}) \sim q(. | \vy_{\leq a})$ that are distributed according to the conditional target distribution.

\subsection{Correctness of \cref{alg:RejectionSampler}}
\begin{theorem}
\label{thm:grs-correctness}
Let $\xi \sim \Normal{0,\vI}$. Then, for any $\hvm, \vm \in \R^d$ and $\sigma > 0$, $\text{GRS}(u,\xi, \hvm, \vm)$ outputs $(\vx,b)$ such that $\vx \sim \Normal{\vm,\sigma^2\vI}$ and $\bP [b=\mathsf{False}] = \TV{\Normal{\hvm,\sigma^2\vI}}{\Normal{\vm,\sigma^2\vI}}$ 
\end{theorem}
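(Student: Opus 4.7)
The plan is to recognize Algorithm~\ref{alg:RejectionSampler} as a rejection-sampling implementation of the reflection-maximal coupling between the proposal density $p(y)=\Normal{y\given\hvm,\sigma^2\vI}$ and the target density $q(y)=\Normal{y\given\vm,\sigma^2\vI}$. A direct algebraic simplification of the two Gaussian ratios gives
\begin{align*}
\frac{q(\hvm+\sigma\xi)}{p(\hvm+\sigma\xi)} \;=\; \frac{\Normal{\xi+\sigma^{-1}\vv\given 0,\vI}}{\Normal{\xi\given 0,\vI}},
\end{align*}
so the indicator $b$ coincides with the standard accept-indicator of rejection sampling: $b = \mathbf{1}[u \leq \min(1, q(\hvm+\sigma\xi)/p(\hvm+\sigma\xi))]$.

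The next step is to identify the rejection output $\vx = \vm + \sigma(\xi - 2\vv\langle \vv,\xi\rangle/\|\vv\|^2)$ as the image of the proposal sample $\hvm+\sigma\xi$ under the reflection $R$ about the midpoint hyperplane $H=\{y:\langle y-(\vm+\hvm)/2,\vv\rangle=0\}$. This identification is a one-line computation once one notes that $\hvm+\sigma\xi - (\vm+\hvm)/2 = \sigma\xi + \vv/2$. I expect this to be the main subtle step: once it is in place, the rest is routine bookkeeping. Since $R$ is an affine isometry and an involution, it has unit Jacobian and swaps $\hvm$ with $\vm$, which yields the \emph{swap identity} $p(Ry)=q(y)$ and $q(Ry)=p(y)$ for every $y\in\R^d$ (because $\|Ry-\hvm\|=\|Ry-R\vm\|=\|y-\vm\|$ and both Gaussians share the covariance $\sigma^2\vI$).

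With the swap identity in hand I would assemble the marginal of $\vx$ branch by branch. On the acceptance branch, $(\vx=z,\,b=\mathsf{True})$ has joint density $p(z)\min(1,q(z)/p(z))=\min(p(z),q(z))$. On the rejection branch, a change of variables through the involution $R$ combined with the swap identity turns the joint density of $(\vx=z,\,b=\mathsf{False})$ into $p(Rz)(1-\min(1,q(Rz)/p(Rz))) = q(z)-\min(p(z),q(z)) = (q(z)-p(z))_+$. Summing the two branches gives $\min(p,q) + (q-p)_+ = q$, so $\vx\sim\Normal{\vm,\sigma^2\vI}$, and integrating the rejection density over $z$ gives $\bP[b=\mathsf{False}]=\int(q-p)_+\,dz = \TV{\Normal{\hvm,\sigma^2\vI}}{\Normal{\vm,\sigma^2\vI}}$, completing the proof.
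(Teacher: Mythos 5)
Your proposal is correct and is essentially the paper's own argument: both identify the acceptance branch as contributing the density $\min(p,q)$ and the rejection branch, via the reflection's isometry and unit-Jacobian change of variables, as contributing $(q-p)_+$, summing to the target Gaussian, with $\bP[b=\mathsf{False}]$ then equal to the total variation distance. The only difference is cosmetic — the paper recenters at $\vm$ and works with the linear reflection $\vI - 2\ve_1\ve_1^{T}$ applied to $\xi$, whereas you phrase the same computation as the affine reflection about the midpoint hyperplane together with the swap identity $p(Ry)=q(y)$, $q(Ry)=p(y)$.
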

\begin{proof}
Without loss of generality, we consider $\sigma = 1$ and observe that
\begin{align*}
    \P{b = \mathsf{True}} &= \int \min\left(1, \frac{\mathcal{N}(\xi+\vv|0, \vI)}{\mathcal{N}(\xi|0, \vI)}\right) \Normal{\xi|0,\vI}  \d \xi \\
    &= \int \min (\Normal{\xi|0,\vI}, \Normal{\xi + \vv| 0, \vI}) \d \xi \\
    &= \int \min (\Normal{\xi|\vm,\vI}, \Normal{\xi| \hvm, \vI}) \d \xi \\
    &= 1 - \TV{\Normal{\xi|\vm,\vI}}{\Normal{\xi|\hvm,\vI}}
\end{align*}
which proves the second claim. Now, let $\vg = \vx - \vm$ and $\ve_1, \ldots, \ve_d$ be an orthonormal basis of $\R^d$ with $\vv = \|\vv\| \ve_1$. Then,
\begin{align}
\vg = \begin{cases}
\xi + \vv & \text{ w.p. } \min\left(1, \frac{\Normal{\xi+\vv|0,\vI}}{\Normal{\xi|0,\vI}}\right) \\[1em]
(\vI - 2\ve_1\ve_1^T)\vv & \text{ w.p. } \max\left(0, 1- \frac{\Normal{\xi+\vv|0,\vI}}{\Normal{\xi|0,\vI}}\right)
\end{cases}
\end{align}
Then, the law of $\vg$ satisfies the following:
\begin{align}
\label{eqn:vg-law}
\Law(\vg) &= \int \delta_{\xi+\vv}(\vg)\min\left(\Normal{\xi|0,\vI}, \Normal{\xi+\vv|0,\vI}\right)d\xi \nonumber \\
&+ \int \delta_{(\vI - 2\ve\ve^T)\xi}(\vg)\max\left(\Normal{\xi|0,\vI}, \Normal{\xi|0,\vI} - \Normal{\xi+\vv|0,\vI}\right) \, d\xi \nonumber \\
&= \min\left(\Normal{\vg - \vv|0,\vI}, \Normal{\vg|0,\vI}\right) + \int \delta_{(\vI - 2\ve\ve^T)\xi}(\vg)\max\left(0, \Normal{\xi|0,\vI} - \Normal{\xi+\vv|0,\vI}\right) \, d\xi 
\end{align}
where $\delta_\vx$ denotes the Dirac measure supported at $\vx$. To bound the integral on the RHS, we note that the matrix $\vM = \vI - 2\ve_1 \ve_1^T$ is a reflection operator along the $\ve_1$ axis, i.e., for any $\vx \in \R^d$ and $\vy = \vM\vx$, $\dotp{\vy}{\ve_1} = - \dotp{\vx}{\ve_1}$ and $\dotp{ \vy}{\ve_j} - \dotp{\vx}{\ve_j}$ for any $j \neq 1$. Hence, it follows that $\|\vy\| = \|\vx\|$ and $\vx = \vM\vy$
\begin{align}
\label{eqn:i1-bound}
\int \delta_{\vM\xi}(\vg)\max\left(0, \Normal{\xi|0,\vI} - \Normal{\xi+\vv|0,\vI}\right) \, d\xi &= \max\left(0, \Normal{\vM \vg|0,\vI} - \Normal{\vM \vg+\vv|0,\vI}\right)
\end{align}
Now, $\Normal{\vM \vg|0,\vI} = \Normal{\vg|0,\vI}$ since $\norm{\vM \vg} = \norm{\vg}$. Moreover, since $\vM = \vM^T$
\begin{align*}
    \|\vM \vg + \vv\|^2 &= \|\vM \vg\|^2 + \|\vv\|^2 + 2 \dotp{\vM \vg}{\vv} \\
    &= \|\vg\|^2 + \|\vv\|^2 + 2 \dotp{\vg}{\vM \ve_1} \|\vv\| \\
    &= \|\vg\|^2 + \|\vv\|^2 - 2 \dotp{\vg}{\ve_1} \|\vv\| \\
    &= \|\vg - \vv\|
\end{align*}
where we use the fact that $\vM \ve_1 = -\ve_1$ and $\vv = \| \vv \|\ve_1$. Hence, $\Normal{\vM \vg+\vv|0,\vI} = \Normal{\vg - \vv | 0, \vI}$. Substituting into equation \eqref{eqn:i1-bound}, we have:
\begin{align*}
    \int \delta_{\vM\xi}(\vg)\max\left(0, \Normal{\xi|0,\vI} - \Normal{\xi+\vv|0,\vI}\right) \, d\xi &= \max\left(0, \Normal{\vg|0,\vI} - \Normal{ \vg-\vv|0,\vI}\right)
\end{align*}
Substituting this into \eqref{eqn:vg-law}, we get
\begin{align*}
    \Law(\vg) &= \min\left(\Normal{\vg - \vv|0,\vI}, \Normal{\vg|0,\vI}\right) + \max\left(0, \Normal{\vg|0,\vI} - \Normal{\vg-\vv|0,\vI}\right) \\
    &= \Normal{\vg|0,\vI}
\end{align*}
Since $\vx = \vm + \vg$, $\vx \sim \Normal{\vm,\vI}$ which completes the proof. 
\end{proof}
\subsection{Proof of \cref{thm:autospec-correctness}}
\label{sec:app-autospec-correctness}
In this section, we present our analysis of Autospeculation for the Stochastic Localization Process. We use $a_t$ to denote the value of the index $a$ at the \emph{end} of the $t^{\mathsf{th}}$ iteration in \cref{alg:AutoSpecSL}. 

\begin{lemma}
\label{lem:a_t_lemma}
$(\vy_0, \ldots, \vy_{a_t})$ is distributed correctly according to the target distribution $q(\vy_0, \ldots, \vy_{a_t})$. Furthermore, $a_t$ is a strictly increasing sequence in $t$
\end{lemma}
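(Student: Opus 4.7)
The plan is to prove both claims by induction on $t$, relying on the correctness of the Gaussian Rejection Sampler (\cref{thm:grs-correctness}) together with the shared-randomness coupling built into \cref{alg:AutoSpecSL} and \cref{alg:ParallelReject}. The base case $t=0$ is immediate, since $a_0 = 0$ and $\vy_0$ is the given initial condition.

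For the inductive step of the first claim, assume that $(\vy_0, \ldots, \vy_{a_{t-1}})$ is distributed according to $q$ and set $a = a_{t-1}$. The key observation is that, by \cref{thm:grs-correctness}, each $\vz_i$ produced by GRS has marginal law $\Normal{\vm_i, \sigma^2_i \vI}$, and moreover $\vz_i = \hvy_i$ whenever the acceptance flag equals $\text{True}$. Along the prefix of consecutive acceptances from $a+1$ up to the first-rejection index $j$, the identity $\hvy_i = \vy_i$ therefore holds, so the target mean $\vm_{i+1} = \hvy_i + \eta_i g(t_i, \hvy_i)$ computed in the parallel loop coincides with the \emph{true} target mean $\vy_i + \eta_i g(t_i, \vy_i)$. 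Consequently, each accepted $\vz_{i+1}$ is distributed as $q(\vy_{i+1} \mid \vy_i)$, and the first resampled $\vz_{j+1}$ is likewise distributed as $q(\vy_{j+1} \mid \vy_j)$ by the marginal guarantee of GRS. Chaining these conditionals via the chain rule yields the joint law $q(\vy_{a+1:a_t} \mid \vy_{0:a_{t-1}})$, closing the induction.

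For the strict-monotonicity claim, I observe that the auxiliary index $j$ in \cref{alg:ParallelReject} is initialized to $a+1$ and is monotone non-decreasing throughout the parallel loop. The algorithm then sets $a_t = j+1$ when $j < b$ and $a_t = j$ otherwise, where $b = \min(K, a + \theta) \geq a + 1$ for $\theta \geq 1$. In every case $a_t \geq a + 1 > a_{t-1}$, as required.

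The principal obstacle is the correctness argument, which must reconcile two distinct sources of discrepancy inside a single iteration: the proposal means $\hvm_i$ are generated using the stale gradient $g(t_a, \vy_a)$ rather than the correct $g(t_{i-1}, \vy_{i-1})$, and the parallel target-mean computation is performed at the speculative points $\hvy_{i-1}$ rather than at the true $\vy_{i-1}$. The reflection coupling inside GRS absorbs the first discrepancy, producing the correct Gaussian marginal irrespective of how the proposal mean was chosen, while the prefix-acceptance structure resolves the second, since along the accepted prefix the speculative and true sequences coincide pointwise. Combining these two ingredients, together with an induction that only ever advances one accepted block at a time, is what makes the argument close cleanly.
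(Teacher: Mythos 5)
Your proof is correct and follows essentially the same route as the paper's: induction over iterations, with the correctness of the Gaussian Rejection Sampler (\cref{thm:grs-correctness}) supplying each conditional $q(\vy_{i+1}\mid\vy_i)$ and the chain rule closing the induction. The only notable difference is cosmetic: you derive strict monotonicity of $a_t$ from the initialization $j \gets a+1$ in the Verifier, whereas the paper derives it from the observation that $\hvm_{a_t+1}=\vm_{a_t+1}$ forces the first proposal to be accepted with probability one; meanwhile your explicit reconciliation of the speculated target means with the true ones along the accepted prefix spells out a step the paper leaves implicit.
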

\begin{proof}
We prove this claim by induction. Clearly, the claim holds for $t=0$. Now, suppose it holds for some $t$. By definition of the proposal means and the target means in equations \eqref{eqn:prop-dist} and \eqref{eqn:target-dist}, $\hvm_{a_t + 1} = \vm_{a_t+1}$. Then, by Theorem \ref{thm:grs-correctness}, \cref{alg:ParallelReject} does not reject the index $a_{t}+1$, thereby ensuring that $a_{t+1} > a_t + 1$ as per lines~\ref{ln:advance-begin}-\ref{ln:accept-reject-end} of \cref{alg:AutoSpecSL}. Since the verifier's outputs are always distributed according to the conditional target distribution,  $\Law(\vy_{a_t + 1}, \dots, \vy_{a_{t+1}} | \vy_0, \ldots, \vy_{a_t}) = q(\vy_{a_t + 1} | \vy_0, \ldots, \vy_{a_t})$. The proof is completed by removing the conditioning on $\vy_0, \ldots, \vy_{a_t}$ by applying the induction hypothesis.

\end{proof}
Since \cref{alg:AutoSpecSL} terminates when $a \geq K$, Lemma \ref{lem:a_t_lemma} implies Theorem \ref{thm:autospec-correctness}

\subsection{Adaptive Complexity of \cref{alg:AutoSpecSL}}
\label{sec:app-autospec-adaptive-bound}
In this section, we analyze the adaptive complexity of \cref{alg:AutoSpecSL} for the SL process. Our proof combines arguments from \citet{anari2024parallel} with a careful analysis of speculations of the SL process motivated by the hidden exchangeability property.  We first define the $\aopt_i(u_{1:K}, \xi_{1:K})$ as the maximum possible value of $a$ such that the parallel rejection sampler does not accept the $i^{\mathsf{th}}$ proposal. Formally
\begin{align*}
    \aopt_i(u_{1:K}, \xi_{1:K}) &= \max \left\{ a \in [K]  \ | \ \text{Verifier}(u_{a+1:K}, \xi_{a+1:K}, \hvm_{a+1:K}, \vm_{a+1:K}) < i \ \right\} 
\end{align*}
where $\vm_i, \hvm_i$ are defined as in \cref{alg:AutoSpecSL} (with $b=K$).
We use $R$ to denote the round complexity (i.e. the number of iterations) taken by \cref{alg:AutoSpecSL}. Since \cref{alg:AutoSpecSL} makes exactly two parallel model calls per iteration, bounding $R$ is equivalent to bounding the adaptive complexity. We first prove a worst case bound on $R$ as a function of $\aopt_i$
\begin{lemma}
\label{lem:round-complexity-worst-case}
The following holds for any integer $\theta \in \mathbb{N}$, $u_{1:K} \in [0,1]^K$ and $\sigma_{1:K} \in \mathbb{R}^K$.
\begin{align*}
    R \leq 1 + \frac{K}{\theta} + | \{ i \in [K] \ | \ \aopt_i(u_{1:K}, \xi_{1:K}) \geq i - \theta  \} |
\end{align*}
\end{lemma}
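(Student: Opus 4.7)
The plan is to partition the iterations of \cref{alg:AutoSpecSL} by how much they advance $a$, and bound each class separately. Let $a_0 = 0 < a_1 < \ldots < a_R$ be the successive values of $a$ at the end of each iteration, with $a_R \geq K$ triggering termination, and set $b_t = \min(K, a_{t-1} + \theta)$, the right end of the speculation window in iteration $t$. Call iteration $t$ a \emph{jump} if $a_t = b_t$, and a \emph{rejection} otherwise; in the latter case the first GRS rejection occurs strictly before $b_t$ at position $a_t$, so $a_{t-1} + 1 \leq a_t \leq b_t - 1 \leq a_{t-1} + \theta - 1$. Let $R_1, R_2$ denote the counts of each type, so $R = R_1 + R_2$.

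To bound $R_1$, I would observe that if $t < R$ is a jump then $a_t < K$, which forces $a_{t-1} + \theta \leq K$ and hence $a_t - a_{t-1} = \theta$. So at least $R_1 - 1$ jump iterations advance $a$ by exactly $\theta$, and since the total advance is at most $K$, we immediately get $R_1 \leq K/\theta + 1$.

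For $R_2$, I would show that each rejection iteration contributes a distinct witness $i = a_t$ to the set $S = \{i \in [K] : \aopt_i(u_{1:K}, \xi_{1:K}) \geq i - \theta\}$. The key point is that the GRS decision at index $i$ depends only on $(u_i, \xi_i, \hvm_i, \vm_i, \sigma_i^2)$, and, conditioned on the starting index $a$, these quantities are deterministic functions of $(u_{a+1:i}, \xi_{a+1:i})$ via the recursion defining $\hvy, \hvm, \vm$; crucially they do not depend on the speculation horizon. So extending the horizon to $K$ does not alter any GRS outcome up to and including the first rejection. Running the verifier from $a_{t-1}$ with horizon $K$ therefore still has its first rejection at $a_t$ and returns a value $< a_t$, which by the definition of $\aopt$ gives $\aopt_{a_t} \geq a_{t-1} \geq a_t - \theta + 1 \geq a_t - \theta$, so $a_t \in S$. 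Because the sequence $(a_t)$ is strictly increasing by \cref{lem:a_t_lemma}, distinct rejection iterations yield distinct witnesses and $R_2 \leq |S|$.

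Summing the two bounds gives the claim. The main technical subtlety I anticipate is the horizon-independence of the GRS outcomes: one must verify that the recursion producing $\hvy_i, \hvm_i, \vm_i$ only looks backward to the starting index $a$, so that the verifier's behavior is a well-defined function of $a$ alone and not of the particular $b_t$ used inside the algorithm. This is what licenses invoking the definition of $\aopt_i$ (which formally uses horizon $K$) to bound events that actually occurred in an iteration with the smaller horizon $b_t$.
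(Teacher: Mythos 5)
Your proof is correct and follows essentially the same decomposition as the paper's: split iterations into those that consume the full speculation window (at most $K/\theta+1$ of them, by summing advances) and those ending in a rejection (each contributing a distinct witness $a_t$ to the set, by monotonicity of $a_t$). In fact your version is slightly more careful than the paper's on two points it glosses over --- the paper's literal definition of a ``good'' iteration as $a_t - a_{t-1} > \theta$ is vacuous since the window caps the advance at $\theta$, whereas your jump/rejection split is the precise statement, and you explicitly justify that the in-algorithm verifier calls with horizon $b_t$ yield the same first-rejection index as the full-horizon verifier appearing in the definition of $\aopt_i$.
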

\begin{proof}
Let $t$ be denoted a good iteration if $a_t - a_{t-1} > \theta$ and a bad iteration otherwise. Since the algorithm halts whenever $a \geq K$, the number of good iterations is bounded by $\tfrac{K}{\theta} + 1$. Since $a_t$ is a monotonically increasing sequence by Lemma \ref{lem:a_t_lemma}, any bad iteration satisfies $\aopt_{a_t}(u_{1:K}, \xi_{1:K}) \geq a_{t-1} \geq a_t - \theta$, i.e., $a_t \in \{ i \in [K] \ | \ \aopt_i(u_{1:K}, \xi_{1:K}) \geq i - \theta \}$. Since $t \to a_t$ is a bijection, we conclude that the number of bad rounds is upper bounded by $| \{ i \in [K] \ | \ \aopt_i(u_{1:K}, \xi_{1:K}) \geq i - \theta  \} |$      
\end{proof}
We now use the above Lemma and the time-invariance properties of the SL process to prove an expectation bound on the round complexity, which establishes the first claim of Theorem \ref{thm:autospec-adaptive-complexity}

\begin{theorem}[Expected Round Complexity of Algorithm \ref{alg:AutoSpecSL}]
\label{thm:autospec-expected-complexity}
Under the assumptions and parameter settings of \cref{thm:autospec-adaptive-complexity}, \cref{alg:AutoSpecSL} run for the discretization of the SL process with  $\theta \asymp (\nicefrac{K}{\beta \eta d})^{\nicefrac{1}{3}}$ makes $O(K^{\nicefrac{2}{3}} (\beta d \eta)^{\nicefrac{1}{3}})$ parallel model calls in expectation.   
\end{theorem}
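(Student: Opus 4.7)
The plan is to apply \cref{lem:round-complexity-worst-case}, take expectations, and reduce to bounding $\sum_i \bP[\aopt_i \geq i - \theta]$ by the expected number of ``bad'' iterations---those with $a_t - a_{t-1} \leq \theta$, in which at least one of the $\theta$ speculations is rejected by the Gaussian Rejection Sampler. For a single iteration starting from $a = a_{t-1}$, I would bound $\bP[\text{bad}]$ in terms of the discrepancy between the joint proposal and target distributions over the speculation window. Since each pair of conditionals is a Gaussian with identical covariance $\eta\vI$, the chain rule for KL decomposes this discrepancy into a sum of squared mean deviations $\|\vm(t_a,\vy_a) - \vm(t_{j-1},\hvy_{j-1})\|^2$, producing a per-iteration KL-type quantity $\mathsf{KL}_t$ that I aim to control via the SL martingale.

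The heart of the argument is the SL martingale identity $\d\vm_t = \Sigma_t \d W_t$ from \cref{thm:sl-prop}, which via It\^o isometry yields $\bE\|\vm(t_{j-1},\vy_{j-1}) - \vm(t_a,\vy_a)\|^2 = \int_{t_a}^{t_{j-1}}\bE\Tr(\Sigma_s^2)\,\d s$ along the target SL trajectory. Hidden exchangeability (\cref{thm:sl-exchangeability}) is precisely what lets this identity transfer to expectations taken under the proposal process---a frozen-drift Brownian motion rather than SL---because the speculated increments agree marginally with genuine SL increments, up to the Euler discretization error controlled by \cref{thm:sl-discretize}. Summing the per-step contributions across the $\theta$ steps of a single iteration yields
\begin{align*}
    \mathsf{KL}_t \;\lesssim\; \tfrac{\theta\eta}{2}\int_{t_{a_{t-1}}}^{t_{a_{t-1}+\theta}}\bE\Tr(\Sigma_s^2)\,\d s.
\end{align*}

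To aggregate these bounds across iterations, I would use that the $a_{t-1}$ are strictly increasing integers (by \cref{lem:a_t_lemma}), so each $s \in [0,T]$ lies in at most $\theta$ of the windows $[t_{a_{t-1}}, t_{a_{t-1}+\theta}]$. This gives $\sum_t \int_{t_{a_{t-1}}}^{t_{a_{t-1}+\theta}}\bE\Tr(\Sigma_s^2)\,\d s \leq \theta\int_0^T\bE\Tr(\Sigma_s^2)\,\d s \leq \theta\beta d$, where the final inequality integrates the identity $\d\bE\Sigma_t/\d t = -\bE\Sigma_t^2$ from \cref{thm:sl-prop}. A second Cauchy--Schwarz across iterations then gives $\bE[\text{bad iters}] \lesssim \theta\sqrt{\eta\beta d \cdot \bE R}$, yielding the self-referential bound $\bE R \lesssim K/\theta + \theta\sqrt{\eta\beta d \cdot \bE R}$. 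Solving this quadratic in $\sqrt{\bE R}$ produces $\bE R = O(K/\theta + \theta^2\eta\beta d)$, and the choice $\theta \asymp (K/(\eta\beta d))^{1/3}$ balances these two terms to yield the claimed $O(K^{2/3}(\eta\beta d)^{1/3})$ bound.

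The most delicate step is the martingale transfer in the second paragraph: although the speculated history $\hvy_{a+1},\dots,\hvy_{a+\theta}$ comes from the frozen-drift proposal rather than from the true SL process, hidden exchangeability ensures that its joint marginal matches a genuine SL trajectory started at $\vy_a$, so the variance-integrated drift bound carries over after a careful coupling. A secondary technical point is accounting for the Euler discretization error of \cref{thm:sl-discretize}, which contributes only a lower-order correction under the usual scaling $\eta\beta d \lesssim 1$.
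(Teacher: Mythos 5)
Your overall architecture matches the paper's: start from \cref{lem:round-complexity-worst-case}, convert rejection probabilities to TV distances via the Gaussian rejection sampler, control mean drift through $\d\vm_t=\Sigma_t\d W_t$ and $\d\bE[\Sigma_t]/\d t=-\bE[\Sigma_t^2]$, and balance $K/\theta$ against the speculation-error term at $\theta\asymp(K/\eta\beta d)^{1/3}$. Your organizational choices differ in two harmless ways: you sum over iterations and solve a self-referential inequality $\bE R\lesssim K/\theta+\theta\sqrt{\eta\beta d\,\bE R}$, whereas the paper sums over indices $i\in[K]$ with a single Cauchy--Schwarz (avoiding the need to handle a random number of summands); and you decompose the per-window discrepancy by the chain rule for KL, whereas the paper bounds $\TV{q(\vy_i|\vy_a)}{p(\vy_i|\vy_a)}^2$ by a three-term triangle inequality through auxiliary continuous-time processes, after first reducing the maximum over $a\in[i-\theta,i-1]$ via Doob's maximal inequality (a step your per-iteration framing legitimately sidesteps).

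The genuine gap is in what you yourself flag as the most delicate step. You assert that hidden exchangeability ensures the speculated history $\hvy_{a+1},\dots,\hvy_{a+\theta}$ ``matches a genuine SL trajectory started at $\vy_a$'' in joint law, and use this to transfer the It\^o-isometry identity $\bE\|\vm_{t_{j-1}}-\vm_{t_a}\|^2=\int_{t_a}^{t_{j-1}}\bE\Tr(\Sigma_s^2)\,\d s$ to expectations under the frozen-drift proposal. This is false: exchangeability (\cref{thm:sl-exchangeability}) only equates the \emph{one-increment marginals}; the proposal draws its increments independently while the true SL increments are exchangeable but correlated, and if the joint laws coincided the rejection probability would be identically zero and the theorem would be vacuous. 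So the chain-rule terms $\bE\|\vm(t_a,\vy_a)-\vm(t_{j-1},\hvy_{j-1})\|^2$ are expectations under a law where the martingale identity does not directly apply, and ``a careful coupling'' is exactly the missing content. The paper resolves this by introducing the processes $\vybarp,\vybart$ (time-shifted copies of the \emph{true} SL dynamics) and choosing the directions of the KL divergences in \cref{thm:girsanov-kl} so that every Girsanov integral is evaluated under a law along which $\d\vm_t=\Sigma_t\d W_t$ holds, then combining via Pinsker and the data-processing inequality. Relatedly, note that exchangeability motivates the algorithm but is not the engine of this quantitative bound, and the Euler-discretization contribution is of the \emph{same} order $\theta\eta\beta d$ as the drift-mismatch term rather than lower order (this does not affect the final rate). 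With the decomposition repaired along the paper's lines, your iteration-level bookkeeping and the final optimization over $\theta$ go through.
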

\begin{proof}

From \cref{lem:round-complexity-worst-case}, we note that 
\begin{align}
\label{eqn:expected-runtime}
    \bE[R] \leq 1 + \frac{K}{\theta} + \sum_{i=\theta}^{K} \bP[\aopt_i(u_{1:K}, \xi_{1:K}) \geq i-\theta]
\end{align}

Note that $\aopt_i(u_{1:K}, \xi_{1:K}) \geq i-\theta$ implies that there exists some $a \in [i-\theta, i-1]$ such that the proposal for step $i$ made at step $a$ was rejected by the Verifier. Then, by \cref{thm:grs-correctness}, 
\begin{align}
\label{eqn:tv-bound-cauchy-schwarz}
    \sum_{i=\theta}^{K} \bP[\aopt_i(u_{1:K}, \xi_{1:K}) \geq i-\theta] &\leq \sum_{i \geq \theta} \bE_{u, \xi}\left[\max_{a \in [i-\theta,i-1]} \TV{q(\vy_i | \vy_a)}{p(\vy_i | \vy_a)}\right] \nonumber \\
    &\leq \sqrt{K}  \sqrt{\sum_{i \geq \theta}\bE_{u,\xi}\left[\max_{a \in [i-\theta,i-1]} \TV{q(\vy_i | \vy_a)}{p(\vy_i | \vy_a)}^2\right]}  
\end{align}
By our choice of the proposal and target distributions, $p(\vy_i | \vy_a)$ and $q(\vy_i | \vy_a)$ are Gaussians with the same variance. To this end, their TV distance is an increasing concave function of the difference of their means. Also, note that for $a \in [i-\theta, i-1]$ $p(\vy_i | \vy_a)$ and $q(\vy_i | \vy_a)$ are Doob Martingales with respect to the filtration generated by $(u_j, \xi_j)_{j \leq i}$, we conclude via the Doob Maximal Inequality for $p=2$ 
\begin{align}
\label{eqn:doob-bound}
    {\bE_{u,\xi}\left[\max_{a \in [i-\theta,i-1]} \TV{q(\vy_i | \vy_a)}{p(\vy_i | \vy_a)}^2\right]}  &\lesssim \bE\left[\TV{q(\vy_i | \vy_{i-\theta})}{p(\vy_i | \vy_{i-\theta})}^2\right]
\end{align}
We now bound the above quantity for each index $i$. To this end, let $l = i - \theta$ and define the following stochastic processes associated with the proposal and target. 
\begin{align*}
    d\vyp_{t_l+t} &= \vm(t_l, \vyp_{t_l}) \d t + \d B_{t_l+t} \\
    d \vyt_{t_i+t} &= \vm(t_i, \vyt_{t_i}) \d t + \d B_{t_i+t} \\
\end{align*}
Then, by the data processing inequality and equation \eqref{eqn:doob-bound}, we obtain:
\begin{align}
\label{eqn:interpolation-bound}
 \bE\left[\max_{a \in [i-\theta,i-1]} \TV{q(\vy_i | \vy_a)}{p(\vy_i | \vy_a)}^2\right]  &\lesssim \bE\left[\TV{P^{\vyp}}{P^{\vyt}}^2\right]
\end{align}

We now define the following auxiliary processes:
\begin{align*}
    d \vybarp_{t_l+t} &= \vm(t_l+t, \vybarp_{t_l+t})\d t +\d B_{t_l+t} \\
    d \vybart_{t_i+t} &= \vm(t_i+t, \vybarp_{t_l+t})\d t +\d B_{t_i+t}
\end{align*}
Similarly, let $P^{\vybarp}$ and $P^{\vybart}$ denote their respective path measures for $t \in [0, \eta_l]$. We observe that: 1. The differential increments of $\vybarp_t$ and $\vybart_t$ match that of the SL process modulo a time-shift, 2. $\vyp_t$ and $\vyt_t$ are Euler discretizations of $\vybarp$ and $\vybart$ respectively. 
By Pinsker's and Cauchy Schwarz Inequality,
\begin{align}
\label{eqn:rejection-pinsker-bound}
    \bE\left[\TV{P^{\vyp}}{P^{\vyt}}^2\right] &\lesssim \bE\left[\TV{P^{\vyp}}{P^{\vybarp}}^2 + \TV{P^{\vyt}}{P^{\vybart}}^2 + \TV{P^{\vybarp}}{P^{\vybart}}^2\right] \nonumber  \\
    &\lesssim \bE\left[\KL{P^{\vybarp}}{P^{\vyp}} + \KL{P^{\vyt}}{P^{\vybart}} + \KL{P^{\vybarp}}{P^{\vybart}}\right]
\end{align}
To upper bound $\KL{P^{\vybarp}}{P^{\vyp}}$, we use the fact that $\vyp$ corresponds to the Euler discretization of $\vybarp$ and follow the same steps as the proof of \cref{thm:sl-discretize}:
\begin{align}
\label{eqn:pinsker-term1-bound}
    \bE[\KL{P^{\vybarp}}{P^{\vyp}}]\ &\lesssim \int_{t_l}^{t_l+\eta_l} \bE[\|\vm_{t} - \vm_{t_l}\|^2] \d t \nonumber \\
    &\lesssim \int_{t_l}^{t_l+\eta_l} \int_{t_l}^{t} \bE[\Tr(\Sigma^2_s)] ds \d t \nonumber\\
    &\lesssim \int_{t_l}^{t_l+\eta_l} \Tr(\bE[\Sigma_{t_l}] - \bE[\Sigma_{t}]) \nonumber \\
    &\lesssim \eta_{i-\theta} \Tr(\bE[\Sigma_{t_{i-\theta}}] - \bE[\Sigma_{t_{i-\theta+1}}])
\end{align}
By a similar computation, we also have 
\begin{align}
\label{eqn:pinsker-term2-bound}
\bE[\KL{P^{\vybart}}{P^{\vyt}}] &\leq \eta_{i-\theta} \Tr(\bE[\Sigma_{t_{i}}] - \bE[\Sigma_{t_{i}+\eta_{i-\theta}}])
\end{align}
Using the fact that the increments of $\vybarp$ and $\vybart$ are time-shifted versions of the SL process increments via Girsanov's theorem and \cref{thm:sl-prop}:
\begin{align}
\label{eqn:pinsker-term3-bound}
    \bE[\KL{P^{\vybarp}}{P^{\vybart}}] &\lesssim \int_{0}^{\eta_l} \bE[\|\vm_{t_l+t}-\vm_{t_i+t}\|^2] \d t \nonumber \\
    &\lesssim \int_{0}^{\eta_l} \int_{t_l+t}^{t_i+t} \bE[\Tr(\Sigma^2_s)] \d s \d t \nonumber \\
    &\lesssim \int_{0}^{\eta_l} \Tr(\bE[\Sigma_{t_l+t}] - \bE[\Sigma_{t_i+t}])\d t \nonumber \\
    &\lesssim \eta_{i-\theta} \Tr(\bE[\Sigma_{t_{i-\theta}}] - \bE[\Sigma_{t_i+\eta_{i-\theta}}])
\end{align}
where we use the fact that $\bE[\Sigma_t]$ is non-increasing in the PSD order. 

Substituting equations \eqref{eqn:rejection-pinsker-bound} \eqref{eqn:pinsker-term1-bound}, \eqref{eqn:pinsker-term2-bound} and \eqref{eqn:pinsker-term3-bound} into equation \eqref{eqn:interpolation-bound} and summing over $i$, we obtain the following:
\begin{align}
\label{eqn:spec-tv-bound}
\bE\left[\max_{a \in [i-\theta,i-1]} \TV{q(\vy_i | \vy_a)}{p(\vy_i | \vy_a)}^2\right] &\lesssim \sum_{i\geq\theta} \eta_{i-\theta} \Tr(\bE[\Sigma_{t_{i-\theta}}] - \bE[\Sigma_{t_{i-\theta+1}}]) + \sum_{i\geq\theta} \eta_{i-\theta} \Tr(\bE[\Sigma_{t_{i}}] - \bE[\Sigma_{t_{i}+\eta_{i-\theta}}]) \nonumber \\
&+ \sum_{i \geq \theta} \eta_{i-\theta} \Tr(\bE[\Sigma_{t_{i-\theta}}] - \bE[\Sigma_{t_i+\eta_{i-\theta}}]) \nonumber \\
&\lesssim 2 \eta \Tr(\bE[\Sigma_0]) + \theta h \Tr(\bE[\Sigma_0]) \lesssim \theta \eta \beta d 
\end{align}
Substituting \eqref{eqn:spec-tv-bound} in \eqref{eqn:tv-bound-cauchy-schwarz} and \eqref{eqn:expected-runtime}, we conclude the following:
\begin{align*}
    \bE[R] &\lesssim \frac{K}{\theta } + \sum_{i\geq \theta} \TV{P^{\vyp}}{P^{\vyt}} \\
    &\lesssim \frac{K}{\theta} + \sqrt{K} \sqrt{\sum_{i\geq \theta}\bE\left[\TV{P^{\vyp}}{P^{\vyt}}^2\right] } \\
    &\lesssim \frac{K}{\theta} + \sqrt{K\theta \eta \beta d}
\end{align*}
$\theta = (\nicefrac{K}{\eta\beta d})^{\nicefrac{1}{3}}$, we obtain the desired expected round complexity. 
\end{proof}
We now boost the expected round complexity guarantee of Theorem \cref{thm:autospec-expected-complexity} into a high probability guarantee, thereby proving the second claim of \cref{thm:autospec-adaptive-complexity}. The proof adapts the arguments of \cite{anari2024parallel}, Theorem 28.

\begin{theorem}[High Probability Bound for Round Complexity]
\label{thm:autospec-high-prob-complexity}
Let $\delta \in (0,1)$ be arbitrary. Consider \cref{alg:AutoSpecSL} for the SL process run under the assumptions and parameter settings of Theorem \ref{thm:autospec-expected-complexity}. Then, with probability at least $1 - \delta$, \cref{alg:AutoSpecSL} makes at most $O(K^{\nicefrac{2}{3}} (\beta d h)^{\nicefrac{1}{3}} \ln(\nicefrac{1}{\delta}))$ parallel model calls.
\end{theorem}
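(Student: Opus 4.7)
The plan is to boost the expected-complexity bound of \cref{thm:autospec-expected-complexity} into a high-probability bound via a conditional Markov (blocking) argument. Let $C$ be the universal constant such that $\bE[R] \leq C \cdot K^{\nicefrac{2}{3}}(\beta d \eta)^{\nicefrac{1}{3}}$, and set $T = 2C \cdot K^{\nicefrac{2}{3}}(\beta d \eta)^{\nicefrac{1}{3}}$. By Markov's inequality, $\bP[R > T] \leq \nicefrac{1}{2}$, which is only a constant-probability guarantee; the goal is to amplify this to confidence $1-\delta$ at the cost of a $\log(\nicefrac{1}{\delta})$ factor.

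The key technical step is a \emph{conditional} version of the expected-complexity bound: for any intermediate state $(\vy_0, \ldots, \vy_a)$ reached after some iterations of \cref{alg:AutoSpecSL}, the conditional expected number of remaining iterations is still at most $C \cdot K^{\nicefrac{2}{3}}(\beta d \eta)^{\nicefrac{1}{3}}$. Two ingredients support this. First, by \cref{lem:a_t_lemma} the process continued from $\vy_a$ is itself an SL process for the conditional target $\mu_a$, whose covariance trace remains bounded by $\beta d$ since $\bE[\Sigma_t]$ is non-increasing in the PSD order (\cref{thm:sl-prop}). Second, by correctness of the Gaussian Rejection Sampler (\cref{thm:grs-correctness}), the continuation of the algorithm can be coupled with a run using fresh independent seeds $(u_i, \xi_i)_{i > a}$ without altering the joint distribution of the output or of the remaining round count; the bound then follows from applying \cref{thm:autospec-expected-complexity} to the resulting fresh sub-problem with $K-a \leq K$ steps.

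Given this uniform conditional bound, a standard induction yields $\bP[R > kT] \leq 2^{-k}$ for all $k \geq 1$: conditional on $\{R > kT\}$ and on the state at time $kT$, Markov's inequality gives $\bP[R > (k+1)T \mid \text{state at } kT] \leq \nicefrac{1}{2}$ since the conditional expected remaining rounds is at most $T/2$. Choosing $k = \lceil \log_2(\nicefrac{1}{\delta}) \rceil$ converts this geometric tail into the stated $O(K^{\nicefrac{2}{3}}(\beta d \eta)^{\nicefrac{1}{3}} \log(\nicefrac{1}{\delta}))$ bound on the number of parallel model calls with probability at least $1-\delta$.

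The main obstacle is making the recoupling step rigorous. The seeds $(u_i, \xi_i)$ for indices $i > a$ in the original algorithm may have participated in earlier iterations (producing discarded rejected proposals), so their conditional distribution given the current state is not simply that of fresh independent uniforms and Gaussians. The remedy, in the spirit of \citet{anari2024parallel} Theorem 28, is to show that those earlier uses depend only on information already captured by $(\vy_0, \ldots, \vy_a)$ together with auxiliary freshly-drawn randomness introduced at restart time, so that an explicit coupling of the continuation with an independent fresh run exists and preserves the round-count distribution. The continuous/SL setting inherits this template, with the monotonicity $\bE[\Sigma_t] \preceq \bE[\Sigma_0]$ taking the place of the discrete-state bounds used in \citet{anari2024parallel}.
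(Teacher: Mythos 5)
Your proposal follows essentially the same route as the paper: a Markov bound at scale $T \asymp K^{\nicefrac{2}{3}}(\beta d \eta)^{\nicefrac{1}{3}}$, amplified to a geometric tail $\bP[R > cT] \leq 2^{-c}$ by induction over restarts, with $c \asymp \log(\nicefrac{1}{\delta})$. The one substantive difference is in how the ``recoupling obstacle'' you flag is handled: the paper does not construct an explicit coupling with a fresh run, but instead observes that the event $E_i$ of sitting at position $a = i$ after a given number of rounds is measurable with respect to $\sigma\bigl((u_j,\xi_j)_{j \leq i}\bigr)$ alone --- because the first-rejection index in every iteration is determined only by seeds whose chain index is at most the new value of $a$ --- so $E_i$ is automatically independent of $(u_l,\xi_l)_{l>i}$ and the continuation is literally a fresh run on the remaining indices, to which the induction hypothesis applies directly. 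Your sketch of the resolution is pointing at the right fact, but the clean way to finish it is this measurability/independence observation rather than an explicit coupling construction.
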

\begin{proof}
By \cref{thm:autospec-expected-complexity} and Markov's inequality, there exists a constant $M$ such that $\P{R > M K^{\nicefrac{2}{3}} (\eta \beta d)^{\nicefrac{1}{3}}} \leq \nicefrac{1}{2}$. We shall now prove via induction that the following holds for any integer $c \geq 1$ and $K \geq 1$: 
\begin{align*}
    \P{R > c M K^{\nicefrac{2}{3}} (\eta \beta d)^{\nicefrac{1}{3}}} \leq 2^{-c}
\end{align*}
The case $c = 1$ holds via Markov's inequality and the case $K = 1$ is trivially true. 

Now, suppose the statement holds for any $K < K_1$ and consider an instance with $K = K_1$. For $i \in [K-1]$ let $E_i$ be the event that \cref{alg:AutoSpecSL} doesn't terminate and has $a = i$ after running for $M K^{\nicefrac{2}{3}} (\eta \beta d)^{\nicefrac{1}{3}}$ rounds. Clearly, $\sum_{i \leq K-1} \P{E_i} \leq \nicefrac{1}{2}$. Moreover, by \cref{lem:a_t_lemma}, running the algorithm for $M K^{\nicefrac{2}{3}} (\eta \beta d)^{\nicefrac{1}{3}}$ ensures $a \geq M K^{\nicefrac{2}{3}} (\eta \beta d)^{\nicefrac{1}{3}}$. Hence, $\P{E_0} = 0$ and for any $c > 1$:
\begin{align}
\label{eqn:induction-conc-bound}
     \P{R > c M K^{\nicefrac{2}{3}} (\eta \beta d)^{\nicefrac{1}{3}}} &= \sum_{i \geq K-1}   \P{R > c M K^{\nicefrac{2}{3}} (\eta \beta d)^{\nicefrac{1}{3}} | E_i} \P{E_i} \nonumber \\
     &\leq \nicefrac{1}{2} \max_{i \in [K-1]} \P{R > c M K^{\nicefrac{2}{3}} (\eta \beta d)^{\nicefrac{1}{3}} | E_i}
\end{align}
Note that for any $i \in [K-1]$, $E_i$ is measurable w.r.t the filtration generated by $(u_j, \xi_j)_{j \leq i}$, i.e., whether or not $E_i$ occurs is determined exactly by these random variables variables, and thus $E_i$ is independent of $(u_l,\xi_l)_{l > i}$. Furthermore, if $E_i$ occurs \cref{alg:AutoSpecSL} fixes the random variables $(u_j, \xi_j)_{j \leq i}$ and moves forward with $a = i$, Therefore, the law of the remaining iterations is equal to that of a fresh run of the algorithm on the iterates $(\vy_l)_{l > i}$ conditioned on the iterates $(\vy_j)_{j \leq i}$ being fixed to their current values (as determined by the $(u_j, \xi_j)_{j \leq i}$). By our induction hypothesis for and the definition of $E_i$, the number of remaining rounds (upon conditioning on $E_i$) exceeds $ (c-1) M K^{\nicefrac{2}{3}} (\eta \beta d)^{\nicefrac{1}{3}}$ with probability at most $2^{-c+1}$, i.e., $\P{R > c M K^{\nicefrac{2}{3}} (\eta \beta d)^{\nicefrac{1}{3}} | E_i} \leq 2^{-c+1}$. Substituting this into equation \eqref{eqn:induction-conc-bound} proves our claim by induction. The desired $O(M K^{\nicefrac{2}{3}} (\eta \beta d)^{\nicefrac{1}{3}} \ln(\nicefrac{1}{\delta}))$ bound on the round complexity is obtained by setting $c \asymp \ln(\nicefrac{1}{\delta})$
\end{proof}

\end{document}